\documentclass{article}





 \usepackage[nonatbib,final]{neurips_2021}
\usepackage{comment}
\usepackage[utf8]{inputenc} 
\usepackage[T1]{fontenc}    
\usepackage{hyperref}       
\usepackage{cite}
\usepackage{url}            
\usepackage{booktabs}       
\usepackage{amsfonts}       
\usepackage{nicefrac}       
\usepackage{microtype}      
\usepackage{xcolor}         
\usepackage{graphicx}
\usepackage{wrapfig}
\usepackage[linesnumbered,ruled]{algorithm2e}
\usepackage{setspace}
\usepackage{amsthm}
\usepackage{amsmath}
\usepackage{amssymb}
\usepackage{enumitem}

\usepackage{amsmath,amsfonts,bm}









\def\eqref#1{equation~\ref{#1}}









\def\1{\bm{1}}










\DeclareMathAlphabet{\mathsfit}{\encodingdefault}{\sfdefault}{m}{sl}
\SetMathAlphabet{\mathsfit}{bold}{\encodingdefault}{\sfdefault}{bx}{n}













\DeclareMathOperator*{\argmax}{arg\,max}
\DeclareMathOperator*{\argmin}{arg\,min}

\usepackage{caption}
\usepackage{subcaption}
\usepackage{wrapfig}
\usepackage{array,multirow,graphicx}
\newtheorem{theorem}{Theorem}

\newtheorem{definition}{Definition}

\title{Adversarial Graph Augmentation 
to Improve \\ Graph Contrastive Learning}

%

\author{%
    Susheel Suresh \\
    Purdue University\\
    \texttt{suresh43@purdue.edu} \\
\And
    Pan Li\thanks{Pan Li and Jennifer Neville co-correspond this work.} \\
    Purdue University\\
    \texttt{panli@purdue.edu} \\
\And
    Cong Hao \\
    Georgia Tech\\
    \texttt{callie.hao@gatech.edu} \\
\And
    Jennifer Neville \\
    Purdue University and Microsoft Research\\
    \texttt{jenneville@microsoft.com} \\
}

\begin{document}

\maketitle
\vspace{-5mm}
\begin{abstract}
  \vspace{-2mm}
Self-supervised learning of graph neural networks (GNN) is in great need because of the widespread label scarcity issue in real-world graph/network data.
Graph contrastive learning (GCL), by training GNNs to maximize the correspondence between the representations of the same graph in its different augmented forms, may yield robust and transferable GNNs even without using labels. However, GNNs trained by traditional GCL often risk capturing redundant graph features and thus may be brittle and provide sub-par performance in downstream tasks. 
Here, we propose a novel principle, termed adversarial-GCL (\textit{AD-GCL}), which enables GNNs to avoid capturing redundant information during the training by optimizing adversarial graph augmentation strategies used in GCL. 
We pair AD-GCL with theoretical explanations and design a practical instantiation based on trainable edge-dropping graph augmentation.
We experimentally validate AD-GCL\footnote{\url{https://github.com/susheels/adgcl}} by comparing with the state-of-the-art GCL methods and achieve performance gains of up-to~14\% in unsupervised, ~6\% in transfer, and~3\% in semi-supervised learning settings overall with 18 different benchmark datasets for the tasks of molecule property regression and classification, and social network classification.

\end{abstract}

\vspace{-3mm}
\section{Introduction}
\vspace{-2mm}
    \label{sec:intro}
Graph representation learning (GRL) aims to encode graph-structured data into low-dimensional vector representations, which has recently shown great potential in many applications in biochemistry, physics and social science~\cite{senior2020improved,shlomi2020graph,hamilton2020book}. Graph neural networks (GNNs), inheriting the power of neural networks~\cite{hornik1989multilayer,cybenko1989approximation}, have become the almost {\em de facto} encoders for GRL~\cite{scarselli2008graph, chami2020machine, zhang2020deep_survey, hamilton2017representation}. GNNs have been mostly studied in  cases with supervised end-to-end training~\cite{kipf2016semi,dai2016discriminative,velickovic2018graph,zhang2018end,xu2018powerful,morris2019weisfeiler,li2020distance}, where a large number of task-specific labels are needed. However, in many applications, annotating labels of graph data takes a lot of time and resources~\cite{hu2019strategies,sun2019infograph}, e.g., identifying pharmacological effect of drug molecule graphs requires living animal experiments~\cite{vogel2002drug}. Therefore, recent research efforts are directed towards studying self-supervised learning for GNNs, where only limited or even no labels are needed~\cite{kipf2016variational,grover2019graphite,gmi,dgi,sun2019infograph,you2020graph, hassani2020contrastive, xie2021self, liu2021graph,zhang2020motif, thakoor2021bootstrapped,zhu2020graph,qiu2020gcc}. 

Designing proper self-supervised-learning principles for GNNs is crucial, as they drive what information of graph-structured data will be captured by  GNNs and may heavily impact their performance in downstream tasks. Many previous works adopt the edge-reconstruction principle to match traditional network-embedding requirement~\cite{belkin2003laplacian,perozzi2014deepwalk,grover2016node2vec,ribeiro2017struc2vec}, where the edges of the input graph are expected to be reconstructed based on the output of GNNs~\cite{hamilton2017inductive,kipf2016variational,grover2019graphite}. 
Experiments showed that these GNN models learn to over-emphasize node proximity~\cite{dgi} and may lose subtle but crucial structural information, thus failing in many tasks including node-role classification~\cite{henderson2012rolx,donnat2018learning,ribeiro2017struc2vec,li2020distance} and graph classification~\cite{hu2019strategies}. 
 
To avoid the above issue, graph contrastive learning (GCL) has attracted more attention recently~\cite{xie2021self,dgi,gmi, liu2021graph,sun2019infograph,hassani2020contrastive,zhang2020motif,thakoor2021bootstrapped,zhu2020graph,qiu2020gcc}. GCL leverages the mutual information maximization principle (InfoMax)~\cite{linsker1988self} that aims to maximize the correspondence between the representations of a graph (or a node) in its different augmented forms~\cite{sun2019infograph,hassani2020contrastive,you2020graph,zhang2020motif,thakoor2021bootstrapped,zhu2020graph,qiu2020gcc}. Perfect correspondence indicates that a representation precisely identifies its corresponding graph (or node) and thus the encoding procedure does not decrease the mutual information between them.
 
However, researchers have found that the InfoMax principle may be risky because it may push encoders to capture redundant information that is irrelevant to the downstream tasks: Redundant information suffices to identify each graph to achieve InfoMax, but encoding it yields brittle representations and may severely deteriorate the performance of the encoder in the downstream tasks~\cite{tschannen2019mutual}. This observation reminds us of another principle, termed information bottleneck (IB)~\cite{tishby2000information,tishby2015deep,goldfeld2020information,alemi2016deep,vdb,betavae}. As opposed to InfoMax, IB asks the encoder to capture the \emph{minimal sufficient} information for the downstream tasks. Specifically, IB minimizes the information from the original data while maximizing the information that is relevant to the downstream tasks. As the redundant information gets removed, the encoder learnt by IB tends to be more robust and transferable. Recently, IB has been applied to GNNs~\cite{wu2020graph,yu2021recognizing}. But IB needs the knowledge of the downstream tasks that may not be available. 
 
Hence, a natural question emerges: \emph{When the knowledge of downstream tasks are unavailable, how to train GNNs that may remove redundant information?} Previous works highlight some solutions by designing data augmentation strategies for GCL but those strategies are typically task-related and sub-optimal. They either leverage domain knowledge~\cite{hassani2020contrastive,zhang2020motif,zhu2020graph}, \textit{e.g.}, node centralities in network science or molecule motifs in bio-chemistry, or depend on extensive evaluation on the downstream tasks, where the best strategy is selected based on validation performance~\cite{you2020graph, zhu2020graph}. 

\begin{figure}
    \centering
    \includegraphics[width=1.0\textwidth]{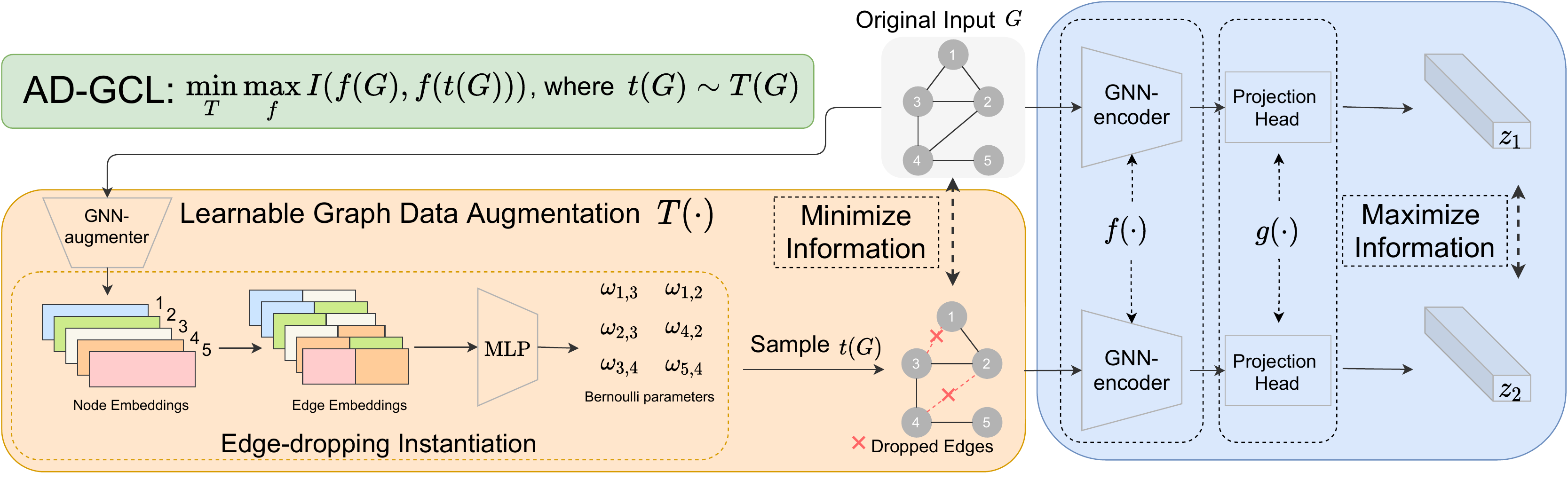}
    \vspace{-4mm}
    \caption{\small{The AD-GCL principle and its instantiation based on learnable edge-dropping augmentation. AD-GCL contains two components for graph data encoding and graph data augmentation. The GNN encoder $f(\cdot)$ maximizes the mutual information between the original graph $G$ and the augmented graph $t(G)$ while the GNN augmenter optimizes the augmentation  $T(\cdot)$ to remove the information from the original graph. The instantiation of AD-GCL proposed in this work uses edge dropping: An edge $e$ of $G$ is randomly dropped according to $\text{Bernoulli}(\omega_e)$, where $\omega_e$ is parameterized by the GNN augmenter. }}
    \label{fig:ad-gcl-pipeline}
    \vspace{-3mm}
\end{figure}

In this paper, we approach this question by proposing a novel principle that pairs GCL with adversarial training, termed \emph{AD-GCL}, as shown in Fig.\ref{fig:ad-gcl-pipeline}. We particularly focus on training self-supervised GNNs for graph-level tasks, though the idea may be generalized for node-level tasks. AD-GCL consists of two components: The first component contains a GNN encoder, which adopts InfoMax to maximize the correspondence/mutual information between the representations of the original graph and its augmented graphs. The second component contains a GNN-based augmenter, which aims to optimize the augmentation strategy to decrease redundant information from the original graph as much as possible. AD-GCL essentially allows the encoder capturing the minimal sufficient information to distinguish graphs in the dataset. We further provide theoretical explanations of AD-GCL. We show that with certain regularization on the search space of the augmenter, AD-GCL can yield a lower bound guarantee of the information related to the downstream tasks, while simultaneously holding an upper bound guarantee of the redundant information from the original graphs, which matches the aim of the IB principle. We further give an instantiation of AD-GCL: The GNN augmenter adopts a task-agnostic augmentation strategy and will learn an input-graph-dependent non-uniform-edge-drop probability to perform graph augmentation.

Finally, we extensively evaluate AD-GCL on 18 different benchmark datasets for molecule property classification and regression, and social network classification tasks in different setting viz. unsupervised learning (Sec.~\ref{exp:unsup}), transfer learning (Sec.~\ref{exp:transfer}) and semi-supervised learning (Sec.~\ref{exp:semisup}) learning. AD-GCL achieves significant performance gains in relative improvement and high mean ranks over the datasets compared to state-of-the-art baselines. 
We also study the theoretical aspects of AD-GCL with apt experiments and analyze the results to offer fresh perspectives (Sec.~\ref{exp:reg_analysis}): Interestingly, we observe that AD-GCL outperforms traditional GCL based on non-optimizable augmentation across almost the entire range of perturbation levels.  

\vspace{-3mm}
\section{Notations and Preliminaries}
\vspace{-2mm}
    We first introduce some preliminary concepts and notations for further exposition. In this work, we consider attributed graphs $G =(V,E)$ where $V$ is a node set and $E$ is an edge set. $G$ may have node attributes $\{X_{v} \in \mathbb{R}^F \mid v \in V\}$ and edge attributes $\{X_{e} \in \mathbb{R}^F \mid e \in E\}$ of dimension $F$. We denote the set of the neighbors of a node $v$ as $\mathcal{N}_v$. \vspace{-2mm}
\paragraph{Learning Graph Representations.} 
Given a set of graphs $G_i$, $i=1,2,...,n$, in some universe $\mathcal{G}$, the aim is to learn an encoder $f:\mathcal{G}\rightarrow \mathbb{R}^d$, where $f(G_i)$ can be further used in some downstream task. 
We also assume that $G_i$'s are all IID sampled from an unknown distribution $\mathbb{P}_{\mathcal{G}}$ defined over $\mathcal{G}$. 
In a downstream task, each $G_i$ is associated with a label $y_i\in\mathcal{Y}$. Another model $q:\mathbb{R}^d\rightarrow \mathcal{Y}$ will be learnt to predict $Y_i$ based on $q(f(G_i))$. We assume $(G_i, Y_i)$'s are IID  sampled from a distribution $\mathbb{P}_{\mathcal{G}\times \mathcal{Y}} = \mathbb{P}_{\mathcal{Y}|\mathcal{G}}\mathbb{P}_{\mathcal{G}}$, where $\mathbb{P}_{\mathcal{Y}|\mathcal{G}}$ is the conditional distribution of the graph label in the downstream task given the graph. 
\paragraph{Graph Neural Networks (GNNs).} In this work, we focus on using GNNs, message passing GNNs in particular~\cite{gilmer2017neural}, as the encoder $f$. For a graph $G=(V,E)$, every node $v\in V$ will be paired with a node representation $h_v$ initialized as $h_v^{(0)}=X_v$. These representations will be updated by a GNN. During the $k^{\text{th}}$ iteration, each $h_v^{(k-1)}$ is updated using $v'$s neighbourhood information expressed as,

\vspace{-4mm}
\begin{equation}
\label{eq:gnn_iteration}
   \small h_v^{(k)} = \text{UPDATE}^{(k)} \Bigg( h_v^{(k-1)}, \; \text{AGGREGATE}^{(k)} \Big( \big\{(h_u^{(k-1)}, X_{uv})  \mid u \in \mathcal{N}_v\big\} \Big) \Bigg)
\end{equation}
 where $\text{AGGREGATE}(\cdot)$ is a trainable function that maps the set of node representations and edge attributes $X_{uv}$ to an aggregated vector, $\text{UPDATE}(\cdot)$ is another trainable function that maps both $v\text{'s}$ current representation and the aggregated vector to $v\text{'s}$ updated representation.
After $K$ iterations of Eq.~\ref{eq:gnn_iteration}, the graph representation is obtained by pooling the final set of node representations as,
\begin{equation}
    \label{eq:pooling}
    f(G):\triangleq h_{G} = \text{POOL}\big(\{ h_v^{(K)} \mid v \in V \}\big)
\end{equation}
For design choices regarding aggregation, update and pooling functions we refer the reader to \cite{zhang2020deep_survey, chami2020machine, hamilton2020book}. \vspace{-5mm}
\paragraph{The Mutual Information Maximization Principle.} 
GCL is built upon the InfoMax principle~\cite{linsker1988self}, which prescribes to learn an encoder $f$ that maximizes the mutual information or the correspondence between the graph and its representation. The rationale behind GCL is that a graph representation $f(G)$ should capture the features of the graph $G$ so that representation can distinguish this graph from other graphs. Specifically, the objective of GCL follows
\begin{equation} \label{eq:infomax}
   \text{InfoMax:}\quad \max_{f} I(G ; f(G)), \quad \text{where } G\sim \mathbb{P}_{\mathcal{G}}.
\end{equation}
where $I(X_1;X_2)$ denotes the mutual information between two random variables $X_1$ and $X_2$~\cite{cover2012elements}. 

Note that the encoder $f(\cdot)$ given by GNNs is not injective in the graph space $\mathcal{G}$ due to its limited expressive power~\cite{xu2018powerful,morris2019weisfeiler}. Specifically, for the graphs that cannot be distinguished by $1$-WL test~\cite{weisfeiler1968reduction}, GNNs will associate them with the same representations. We leave more discussion on 1-WL test in Appendix~\ref{apd:1wl}. In contrast to using CNNs as encoders, one can never expect GNNs to identify all the graphs in $\mathcal{G}$ based their representations, which introduces a unique challenge for GCL. 

\vspace{-3mm}
\section{Adversarial Graph Contrastive Learning}
\vspace{-2mm}
    In this section, we introduce our adversarial graph contrastive learning (AD-GCL) framework and one of its instantiations based on edge perturbation. 
\vspace{-3mm}
\subsection{Theoretical Motivation and Formulation of AD-GCL}
\vspace{-1mm}
\label{sec:ad-gcl}

The InfoMax principle in Eq.~\ref{eq:infomax} could be problematic in practice for general representation learning. Tschannen et al. have shown that for image classification, representations capturing the information that is entirely irrelevant to the image labels are also able to maximize the mutual information but such representations are definitely not useful for image classification~\cite{tschannen2019mutual}. A similar issue can also be observed 
in graph representation learning, as illustrated by Fig.\ref{fig:bad-exm}: We consider a binary graph classification problem with graphs in the dataset ogbg-molbace~\cite{hu2020open}. Two GNN encoders with exactly the same architecture are trained to keep mutual information maximization between graph representations and the input graphs, but one of the GNN encoders in the same time is further supervised by random graph labels. Although the GNN encoder supervised by random labels still keeps one-to-one correspondance between every input graph and its representation (i.e., mutual information maximization), we may observe significant performance degeneration of this GNN encoder when evaluating it over the downstream ground-truth labels.  
More detailed experiment setup is left in Appendix~\ref{apd:exp_settings_motivation}. 

\begin{wrapfigure}{r}{0.38\textwidth}
\vspace{-7mm}
  \begin{center}
    \includegraphics[width=0.38\textwidth]{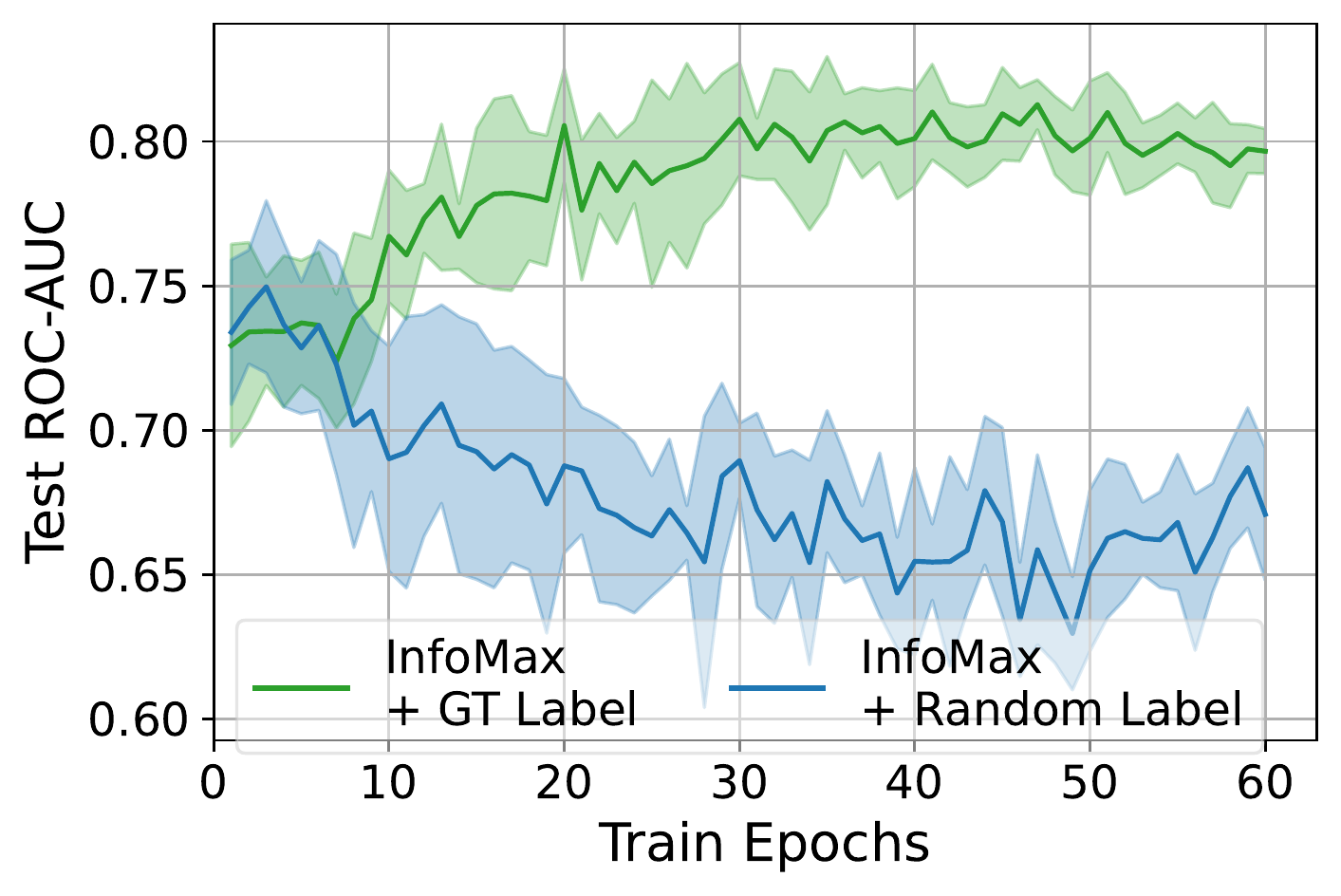}
  \end{center}
  \vspace{-4mm}
  \caption{\small{Two GNNs keep the mutual information maximized between graphs and their representations. Simultaneously, they get supervised by ground-truth labels (green) and random labels (blue) respectively. 
  The curves show their testing performance on predicting ground-truth labels. }} 
  \vspace{-2mm}
  \label{fig:bad-exm}
\end{wrapfigure}

This observation inspires us to rethink what a good graph representation is. Recently, the information bottleneck has applied to learn graph representations~\cite{wu2020graph,yu2021recognizing}. Specifically, the objective of graph information bottleneck (GIB) follows \vspace{-1mm}
\begin{align} \label{eq:GIB}
  \text{GIB:}\quad  \max_{f} I(f(G); Y) - \beta I(G; f(G)),
\end{align} 
\text{where } $(G,Y)\sim \mathbb{P}_{\mathcal{G}\times \mathcal{Y}}, \beta \text{ is a positive constant.}$ Comparing Eq.~\ref{eq:infomax} and Eq.~\ref{eq:GIB}, we may observe the different requirements between InfoMax and GIB: InfoMax asks for maximizing the information from the original graph, while GIB asks for minimizing such information but simultaneously maximizing the information that is relevant to the downstream tasks. As GIB asks to remove redundant information, GIB naturally avoids the issue encountered in Fig.\ref{fig:bad-exm}. Removing extra information also makes GNNs trained w.r.t. GIB robust to adverserial attack and strongly transferrable~\cite{wu2020graph,yu2021recognizing}.  

Unfortunately, GIB requires the knowledge of the class labels $Y$ from the downstream task and thus does not apply to self-supervised training of GNNs where there are few or no labels. Then, the question is how to learn robust and transferable GNNs in a self-supervised way. 

To address this, we will develop a GCL approach that uses adversarial learning to  avoid capturing redundant information during the representation learning. 
In general, GCL methods use graph data augmentation (GDA) processes to perturb the original observed graphs and decrease the amount of information they encode. 
Then, the methods apply InfoMax over perturbed graph pairs (using different GDAs) to train an encoder $f$ to capture the remaining information. 
\begin{definition}[Graph Data Augmentation (GDA)]
For a graph $G \in \mathcal{G}$, $T(G)$ denotes a graph data augmentation of $G$, which is a distribution defined over $\mathcal{G}$ conditioned on $G$. We use $t(G)\in \mathcal{G}$ to denote a sample of $T(G)$. 
\end{definition}\vspace{-2mm}
Specifically, given two ways of GDA $T_1$ and $T_2$, the objective of GCL becomes
\begin{equation} \label{eq:gcl-fix-aug}
  \text{GDA-GCL:}\,  \max_{f} I(f(t_1(G)) ; f(t_2(G))), \, \text{where } G\sim \mathbb{P}_{\mathcal{G}}, t_i(G)\sim T_i(G), i\in\{1,2\}.
\end{equation}
In practice, GDA processes are often pre-designed based on either domain knowledge or extensive evaluation, and improper choice of GDA may severely impact the downstream performance~\cite{you2020graph,hu2019strategies}. 
 We will review a few GDAs adopted in existing works in Sec.\ref{sec:related}. 
 
 In contrast to previous predefined GDAs, our idea, inspired by GIB, is to {\em learn} the GDA process (over a parameterized family), so that the encoder $f$ can capture the \textbf{minimal information } that is sufficient to identify each graph. 
 
\vspace{-1mm}

\paragraph{AD-GCL:} 
We optimize the following objective, over a GDA family $\mathcal{T}$ (defined below).
\begin{equation} \label{eq:ad-gcl}
  \text{AD-GCL:}\quad  \min_{T\in \mathcal{T}} \max_{f} I(f(G) ; f(t(G))), \quad \text{where } G\sim \mathbb{P}_{\mathcal{G}}, t(G)\sim T(G),
\end{equation}
\begin{definition}[Graph Data Augmentation Family]  Let $\mathcal{T}$ denote a family of different GDAs $T_{\Phi}(\cdot)$, where $\Phi$ is the parameter in some universe. A $T_{\Phi}(\cdot)\in \mathcal{T}$ is a specific GDA with parameter $\Phi$. 
\end{definition} \vspace{-0.5mm}
The min-max principle in AD-GCL aims to train the encoder such that even with a very aggressive GDA (i.e., where $t(G)$ is very different from $G$), the mutual information / the correspondence between the perturbed graph and the original graph can be maximized. 
Compared with the two GDAs adopted in GDA-GCL (Eq.\ref{eq:gcl-fix-aug}), AD-GCL views the original graph $G$ as the anchor while pushing its perturbation $T(G)$ as far from the anchor as it can. The automatic search over $T \in \mathcal{T}$ saves a great deal of effort  evaluating different combinations of GDA as adopted in \cite{you2020graph}. 

\vspace{-2mm}
\paragraph{Relating AD-GCL to the downstream task.} 
Next, we will theoretically characterize the property of the encoder trained via AD-GCL.  
The analysis here not only further illustrates the rationale of AD-GCL but helps design practical $\mathcal{T}$ when some knowledge of $Y$ is accessible. But note that our analysis does not make any assumption on the availability of $Y$.

Note that GNNs learning graph representations is very different from CNNs learning image representations because GNNs are never injective mappings between the graph universe $\mathcal{G}$ and the representation space $\mathbb{R}^d$, because the expressive power of GNNs is limited by the 1-WL test~\cite{weisfeiler1968reduction,xu2018powerful,morris2019weisfeiler}. So, we need to define a quotient space of $\mathcal{G}$ based on the equivalence given by the 1-WL test. \vspace{-1mm}

\begin{definition}[Graph Quotient Space]
Define the equivalence $\cong$ between two graphs $G_1\cong G_2$ if $G_1,\,G_2$ cannot be distinguished by the 1-WL test. Define the quotient space $\mathcal{G}' = \mathcal{G}/\cong$. 
\end{definition}\vspace{-1mm}
So every element in the quotient space, i.e., $G' \in \mathcal{G}'$, is a representative graph from a family of graphs that cannot be distinguished by the 1-WL test. Note that our definition also allows attributed graphs.
\begin{definition}[Probability Measures in $\mathcal{G}'$]
Define $\mathbb{P}_{\mathcal{G}'}$ over the space $\mathcal{G}'$ such that $\mathbb{P}_{\mathcal{G}'}(G') = \mathbb{P}_{\mathcal{G}}(G\cong G')$ for any $G'\in\mathcal{G}'$. Further define $\mathbb{P}_{\mathcal{G}'\times \mathcal{Y}} (G', Y') = \mathbb{P}_{\mathcal{G}\times \mathcal{Y}} (G\cong G', Y=Y')$. Given a GDA $T(\cdot)$ defined over $\mathcal{G}$, define a distribution on $\mathcal{G}'$, $T'(G') = \mathbb{E}_{G\sim \mathbb{P}_{\mathcal{G}}}[T(G)| G\cong G']$ for $G'\in \mathcal{G}'$. 
\end{definition}\vspace{-1mm}


Now, we provide our theoretical results and give their implication. The proof is in the Appendix~\ref{apd:prf}.
\begin{theorem} \label{thm:main}
Suppose the encoder $f$ is implemented by a GNN as powerful as the 1-WL test. Suppose $\mathcal{G}$ is a countable space and thus $\mathcal{G'}$ is a countable space. Then, the optimal solution $(f^*, T^*)$ to AD-GCL satisfies, letting $T'^{*}(G') = \mathbb{E}_{G\sim \mathbb{P}_{\mathcal{G}}}[T^*(G)| G\cong G']$, \vspace{-1mm}
\begin{enumerate}[leftmargin=*]
    \item $I(f^*(t^*(G)); G\,|\,Y) \leq  \min_{T\in\mathcal{T}} I(t'(G'); G') - I(t'^*(G'); Y) $, where $t'(G')\sim T'(G')$, $t'^*(G')\sim T'^{*}(G')$, $(G,Y)\sim \mathbb{P}_{\mathcal{G}\times \mathcal{Y}}$ and $(G',Y)\sim \mathbb{P}_{\mathcal{G}'\times \mathcal{Y}}$.
    \item $I(f^*(G); Y)\geq I(f^*(t'^*(G')); Y) = I(t'^*(G'); Y)$, where  $t'^*(G')\sim T'^*(G')$, $(G,Y)\sim \mathbb{P}_{\mathcal{G}\times \mathcal{Y}}$ and $(G',Y)\sim \mathbb{P}_{\mathcal{G}'\times \mathcal{Y}}$.
\end{enumerate}
\vspace{-1mm}
\end{theorem}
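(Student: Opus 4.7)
The plan is to translate every quantity from $\mathcal{G}$ to the quotient $\mathcal{G}'$ using two observations: since $f^*$ is 1-WL-powerful it acts bijectively on $\mathcal{G}'$, so $f^*(G)\leftrightarrow G'$ and $f^*(t(G))\leftrightarrow [t(G)]$; and the inner objective $I(f(G); f(t(G)))$ depends on $T$ only through its projection $T'$ onto $\mathcal{G}'$, so without loss of generality $T^*$ may be taken to depend on $G$ only through $G'$.

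First I would verify the reduction. Using the bijection, $\max_f I(f(G); f(t(G))) = I(G'; [t(G)])$, and marginalizing over $G$ within each equivalence class then gives $I(G'; t'(G'))$ by the definition of $T'$. Hence AD-GCL reduces to $\min_T I(t'(G'); G')$, whose minimizer can be identified with $T^*$. Taking $T^*$ to depend only on $G'$ yields the conditional independences $[t^*(G)]\perp G \mid G'$ and $[t^*(G)]\perp Y \mid G'$; the latter uses both that the augmenter never sees $Y$ and that $Y$ depends on $G$ only through $\mathbb{P}_{\mathcal{Y}\mid\mathcal{G}}$.

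For part 1 I would expand $I(f^*(t^*(G)); G\,|\,Y)$ via the chain rule into $I(f^*(t^*(G)); G) + I(f^*(t^*(G)); Y\,|\,G) - I(f^*(t^*(G)); Y)$, drop the middle term using the Markov chain $Y\to G\to f^*(t^*(G))$, and then invoke the bijection together with the WLOG reduction to rewrite the remainder as $I(t'^*(G'); G') - I(t'^*(G'); Y) = \min_{T\in\mathcal{T}} I(t'(G'); G') - I(t'^*(G'); Y)$, which is exactly the claimed upper bound. Part 2 follows from the data processing inequality applied to the Markov chain $Y\to G'\to t'^*(G')$, giving $I(G'; Y) \geq I(t'^*(G'); Y)$; combined with the bijection identities $I(f^*(G); Y) = I(G'; Y)$ and $I(f^*(t'^*(G')); Y) = I(t'^*(G'); Y)$, this closes the chain of inequalities.

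The most delicate step is the WLOG reduction of $T^*$ to a function of $G'$, i.e., verifying that any two $T$'s with identical projection $T'$ yield the same joint law of $(G', [t(G)], Y)$ and hence the same value of every mutual-information quantity appearing in the statement. Countability of $\mathcal{G}$ is used here to make the conditional distributions $T(G)\mid G'$ (and thus $T'$) well defined.
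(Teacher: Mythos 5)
Your proposal follows a genuinely different route from the paper's proof, and the comparison is worth spelling out.

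For statement~1, you decompose $I(f^*(t^*(G)); G\mid Y)$ directly via
$I(X;G\mid Y)=I(X;G)+I(X;Y\mid G)-I(X;Y)$,
kill the middle term by the Markov chain $Y\to G\to t^*(G)\to f^*(t^*(G))$, and then translate the two remaining terms to quotient-space quantities. The paper instead starts from $I(t'^*(G');G')$, expands it as $I(t'^*(G');Y)+I(t'^*(G');G'\mid Y)$, and uses the data-processing inequality $I(t'^*(G');G'\mid Y)\ge I(f^*(t'^*(G'));G'\mid Y)$ before translating back. Your version yields an \emph{equality} where the paper only claims an inequality; this is consistent (an equality is stronger and implies the claim) because when $f^*$ is a.e.~injective the DPI step in the paper is actually tight. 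Both derivations hinge on the same delicate point you correctly flag: the paper's step $I(f^*(t'^*(G'));G'\mid Y)=I(f^*(t^*(G));G'\mid Y)$ is exactly your ``WLOG $T^*$ depends on $G$ only through $G'$'' reduction — that two GDAs with the same projection $T'$ induce the same joint law of $([t(G)],G',Y)$ — and neither proof can avoid it, so naming it explicitly is a feature of your write-up, not a defect.

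For statement~2, however, there is a gap. You go through $I(f^*(G);Y)=I(G';Y)\ge I(t'^*(G');Y)=I(f^*(t'^*(G'));Y)$. The first equality requires $f^*$ to be a.e.~injective on $\mathrm{supp}(\mathbb{P}_{\mathcal{G}'})$, i.e.\ on the \emph{original} graph support. But the argument that $f^*=\arg\max_f I(f(G);f(t^*(G)))$ forces injectivity only pins down $f^*$ on $\mathcal{G}'\cap\mathrm{Supp}(\mathbb{E}_{G'\sim\mathbb{P}_{\mathcal{G}'}}[T'^*(G')])$, i.e.\ the support of the \emph{augmented} graphs, which can be a strict subset when $T^*$ drops edges. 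Without injectivity on the full original support, DPI gives only $I(f^*(G);Y)\le I(G';Y)$, in the wrong direction, and your chain does not close. The paper avoids this by routing the inequality entirely through data processing: $I(f^*(G);Y)\ge I(f^*(t^*(G));Y)\ge I(f^*(t'^*(G'));Y)=I(t'^*(G');Y)$, where only the last equality needs injectivity, and that only on the augmentation support which the optimality of $f^*$ does supply. To repair your argument you should replace the identity $I(f^*(G);Y)=I(G';Y)$ by the two-step DPI chain through $I(f^*(t^*(G));Y)$, or explicitly add the hypothesis that $f^*$ is a.e.~injective on $\mathrm{supp}(\mathbb{P}_{\mathcal{G}'})$.
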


The statement 1 in Theorem~\ref{thm:main} guarantees a upper bound of the information that is captured by the representations but irrelevant to the downstream task, which matches our aim. This bound has a form very relevant to the GIB principle (Eq.\ref{eq:GIB} when $\beta=1$), since $ \min_{T\in\mathcal{T}} I(t'(G'); G') - I(t'^*(G'); Y) \geq \min_{f} [I(f(G); G) - I(f(G); Y)]$,
where $f$ is a GNN encoder as powerful as the 1-WL test. But note that this inequality also implies that the encoder given by AD-GCL may be worse than the optimal encoder given by GIB ($\beta =1$). This makes sense as GIB has the access to the downstream task $Y$.

The statement 2 in Theorem~\ref{thm:main} guarantees a lower bound of the mutual information between the learnt representations and the labels of the downstream task. As long as the GDA family $\mathcal{T}$ has a good control, $I(t'^*(G'); Y)\geq \min_{T\in\mathcal{T}}I(t'(G');Y)$ and $I(f^*(G); Y)$ thus cannot be too small. This implies that it is better to regularize when learning over $\mathcal{T}$. In our instantiation, based on edge-dropping augmentation (Sec.~\ref{sec:instantiation}), we regularize the ratio of dropped edges per graph. 

\vspace{-1mm}
\subsection{Instantiation of AD-GCL via Learnable Edge Perturbation} \label{sec:instantiation}
\vspace{-1mm}
We now introduce a practical instantiation of the AD-GCL principle (Eq.~\ref{eq:ad-gcl}) based on learnable edge-dropping augmentations as illustrated in Fig.~\ref{fig:ad-gcl-pipeline}. 
(See Appendix~\ref{apd:algo} for a summary of AD-GCL in its algorithmic form.)
The objective of AD-GCL has two folds: (1) Optimize the encoder $f$ to maximize the mutual information between the representations of the original graph $G$ and its augmented graph $t(G)$; (2) Optimize the GDA $T(G)$ where $t(G)$ is sampled to minimize such a mutual information. We always set the encoder as a GNN $f_{\Theta}$ with learnable parameters $\Theta$ and next we focus on the GDA, $T_{\Phi}(G)$ that has learnable parameters $\Phi$. \vspace{-1mm}

\paragraph{Learnable Edge Dropping GDA model $T_{\Phi}(\cdot)$.}
Edge dropping is the operation of deleting some edges in a graph. As a proof of concept, we adopt edge dropping to formulate the GDA family $\mathcal{T}$. Other types of GDAs such as node dropping, edge adding and feature masking can also be paired with our AD-GCL principle. 
Interestingly, in our experiments, edge-dropping augmentation optimized by AD-GCL has already achieved much better performance than any pre-defined random GDAs even carefully selected via extensive evaluation~\cite{you2020graph} (See Sec.\ref{sec:exp}). 
Another reason that supports edge dropping is due to our Theorem~\ref{thm:main} statement 2, which shows that good GDAs should keep some information related to the downstream tasks. Many GRL downstream tasks such as molecule classification only depends on the structural fingerprints that can be represented as subgraphs of the original graph~\cite{duvenaud2015convolutional}. Dropping a few edges may not change those subgraph structures 
and thus keeps the information sufficient to the downstream classification. But note that this reasoning does not mean that we leverage domain knowledge to design GDA, as the family $\mathcal{T}$ is still broad and the specific GDA still needs to be optimized. Moreover, experiments show that our instantiation also works extremely well on social network classification and molecule property regression, where the evidence of subgraph fingerprints may not exist any more.  \vspace{-1mm}

\paragraph{Parameterizing $T_{\Phi}(\cdot)$.} For each $G=(V,E)$, we set $T_{\Phi}(G)$, $T\in \mathcal{T}$ as a random graph model \cite{gilbert1959random, erdds1959random} conditioning on $G$. Each sample $t(G)\sim T_{\Phi}(G)$ is a graph that shares the same node set with $G$ while the edge set of $t(G)$ is only a subset of $E$. Each edge $e \in E$ will be associated with a random variable $p_e\sim$ Bernoulli$(\omega_e)$, where $e$ is in $t(G)$ if $p_e=1$ and is dropped otherwise. 

We parameterize the Bernoulli weights $\omega_e$ by leveraging another GNN, \textit{i.e.,} the \emph{augmenter}, to run on $G$ according to Eq.\ref{eq:gnn_iteration} of $K$ layers, get the final-layer node representations $\{h_v^{(K)}|v\in V\}$ and set
\begin{equation}\label{eq:w_e}
    \omega_{e} = \text{MLP} ([h_u^{(K)};h_z^{(K)}]), \quad \text{where }  e=(u,z) \,\text{and}\,\{ h_v^{(K)} \mid v \in V \} = \text{GNN-augmenter}(G)
\end{equation}

To train $T(G)$ in an end-to-end fashion, 
we relax the discrete $p_{e}$ to be a continuous variable in $[0,1]$ and utilize the Gumbel-Max reparametrization trick \cite{maddison2016concrete, jang2016categorical}. Specifically, 
$p_{e} = \text{Sigmoid}((\log \delta - \log(1-\delta) + \omega_{e})/\tau)$, where $\delta \sim \text{Uniform(0,1)}$. 
As temperature hyper-parameter $\tau \rightarrow 0$, $p_{e}$ gets closer to being binary. Moreover, the gradients $\frac{\partial p_{e}}{\partial \omega_{e}}$ are smooth and well defined. This style of edge dropping based on a random graph model has also been used for parameterized explanations of GNNs \cite{luo2020parameterized}. \vspace{-1mm}

\paragraph{Regularizing $T_{\Phi}(\cdot)$.} As shown in Theorem~\ref{thm:main}, a reasonable GDA should keep a certain amount of information related to the downstream tasks (statement 2). Hence, we expect the GDAs in the edge dropping family $\mathcal{T}$ not to perform very aggressive perturbation. Therefore, we regularize the ratio of edges being dropped  per graph by enforcing the following constraint: For a graph $G$ and its augmented graph $t(G)$, we add $\sum_{e\in E} \omega_e/|E|$ to the objective, where $\omega_e$ is defined in Eq.\ref{eq:w_e} indicates the probability that $e$ gets dropped. 

Putting everything together, the final objective is as follows.
\begin{equation} \label{eq:ad-gcl-encoder-augmentor-reg}
  \min_{\Phi} \max_{\Theta} I(f_{\Theta}(G) ; f_{\Theta}(t(G))) + \lambda_{\text{reg}} \mathbb{E}_{G}\big[\sum_{e \in E} \omega_e/|E|\big],\, \text{where } G\sim \mathbb{P}_{\mathcal{G}}, t(G)\sim T_{\Phi}(G).
\end{equation}

\vspace{-3mm}
Note $\Phi$ corresponds to the learnable parameters of the augmenter GNN and MLP used to derive the $\omega_{e}$'s and $\Theta$ corresponds to the learnable parameters of the GNN $f$. 
\vspace{-2mm}
\paragraph{Estimating the objective in Eq.\ref{eq:ad-gcl-encoder-augmentor-reg}.} In our implementation, the second (regularization) term is easy to estimate empirically. For the first (mutual information) term, we adopt InfoNCE as the estimator~\cite{oord2018representation, tian2019contrastive, poole2019variational}, which is known to be a lower bound of the mutual information and is frequently used for contrastive learning~\cite{oord2018representation,tschannen2019mutual,chen2020simple}. Specfically, during the training, given a minibatch of $m$ graphs $\{G_i\}_{i=1}^m$, let $z_{i, 1} = g(f_{\Theta}(G_i))$ and $z_{i, 2} = g(f_{\Theta}(t(G_i)))$ where $g(\cdot)$ is the projection head implemented by a 2-layer MLP as suggested in~\cite{chen2020simple}. With $sim(\cdot, \cdot)$ denoting cosine similarity, we estimate the mutual information for the mini-batch as follows.
\begin{equation}
    \label{eq:loss}
    \small{I(f_{\Theta}(G) ; f_{\Theta}(t(G))) \rightarrow \hat{I}= \frac{1}{m}\sum_{i= 1}^{m} \log \frac{\exp(sim(z_{i,1}, z_{i,2}))}{\sum_{i^\prime = 1, i^\prime \neq i }^{m}\exp(sim(z_{i,1}, z_{i^\prime,2}))}}
\end{equation}

\vspace{-6mm}
\section{Related Work}
\vspace{-2mm}
    \label{sec:related}
GNNs for GRL is a broad field and gets a high-level review in the Sec.~\ref{sec:intro}. Here, we focus on the topics that are most relevant to graph contrastive learning (GCL).

Contrastive learning (CL)~\cite{linsker1988self,becker1992self, henaff2020data, oord2018representation, tian2019contrastive,hjelm2018learning} was initially proposed to train CNNs for image representation learning and has recently achieved great success~\cite{chen2020simple,chen2020big}. GCL applies the idea of CL on GNNs. In contrast to the case of CNNs, GCL trained using GNNs posts us new fundamental challenges. An image often has multiple natural views, say by imposing different color filters and so on. Hence, different views of an image give natural contrastive pairs for CL to train CNNs. However, graphs are more abstract and the irregularity of graph structures typically provides crucial information. Thus, designing contrastive pairs for GCL must play with irregular graph structures and thus becomes more challenging. Some works use different parts of a graph to build contrastive pairs, including nodes \textit{v.s.} whole graphs~\cite{velivckovic2018deep,sun2019infograph}, nodes \textit{v.s.} nodes~\cite{peng2020graph}, nodes \textit{v.s.} subgraphs~\cite{jiao2020sub, hu2019strategies}.
Other works adopt graph data augmentations (GDA) such as edge perturbation~\cite{qiu2020gcc} to generate contrastive pairs. Recently. GraphCL~\cite{you2020graph} gives an extensive study on different combinations of GDAs including node dropping, edge perturbation, subgraph sampling and feature masking. Extensive evaluation is required to determine good combinations. MVGRL~\cite{hassani2020contrastive} and GCA~\cite{zhu2020graph} leverage the domain knowledge of network science and adopt network centrality to perform GDAs. Note that none of the above methods consider optimizing augmentations.  In contrast, our principle AD-GCL provides theoretical guiding principles to optimize augmentations. Very recently, JOAO~\cite{you2021graph} adopts a bi-level optimization framework sharing some high-level ideas with our adversarial training strategy but has several differences: 1) the GDA search space in JOAO is set as different types of augmentation with uniform perturbation, such as uniform edge/node dropping while we allow augmentation with non-uniform perturbation. 2) JOAO relaxes the GDA combinatorial search problem into continuous space via Jensen’s inequality and adopts projected gradient descent to optimize. Ours, instead, adopts Bayesian modeling plus reparameterization tricks to optimize. The performance comparison between AD-GCL and JOAO for the tasks investigated in Sec.~\ref{sec:exp} is given in Appendix~\ref{apd:joao_compare}. 


Tian et al.~\cite{tian2020makes} has recently proposed the InfoMin principle that shares some ideas with AD-GCL but there are several fundamental differences. Theoretically, InfoMin needs the downstream tasks to supervise the augmentation. Rephrased in our notation, the optimal augmentation $T_{IM}(G)$ given by InfoMin (called the sweet spot in~\cite{tian2020makes}) needs to satisfy $I(t_{IM}(G);Y) = I(G;Y)$ and $I(t_{IM}(G);G|Y)=0$, $t_{IM}(G)\sim T_{IM}(G)$, neither of which are possible without the downstream-task knowledge. Instead, our Theorem~\ref{thm:main} provides more reasonable arguments and creatively suggests using regularization to control the tradeoff. Empirically, InfoMin is applied to CNNs while AD-GCL is applied to GNNs. AD-GCL needs to handle the above challenges due to irregular graph structures and the limited expressive power of GNNs~\cite{xu2018powerful,morris2019weisfeiler}, which InfoMin does not consider.

\vspace{-3mm}
\section{Experiments and Analysis}
\vspace{-3mm}
    \label{sec:exp}
This section is devoted to the empirical evaluation of the proposed instantiation of our AD-GCL principle. Our initial focus is on unsupervised learning which is followed by analysis of the effects of regularization. We further apply AD-GCL to transfer and semi-supervised learning. Summary of datasets and training details for specific experiments are provided in Appendix~\ref{apd:datasets} and \ref{apd:exp_settings} respectively.

\vspace{-3mm}
\subsection{Unsupervised Learning}
\vspace{-3mm}
\label{exp:unsup}
In this setting, an encoder (specifically GIN~\cite{xu2018how}) is trained with different self-supervised methods to learn graph representations, which are then evaluated by feeding these representations to make prediction for the downstream tasks. We use datasets from Open Graph Benchmark (OGB)~\cite{hu2020open}, TU Dataset~\cite{Morris+2020} and ZINC~\cite{dwivedi2020benchmarking} for graph-level property classification and regression. More details regarding the experimental setting are provided in the Appendix~\ref{apd:exp_settings}.

We consider two types of AD-GCL, where one is with a fixed regularization weight $\lambda_{\text{reg}}=5$ (Eq.\ref{eq:ad-gcl-encoder-augmentor-reg}), termed AD-GCL-FIX, and another is with $\lambda_{\text{reg}}$ tuned over the validation set among $\{0.1, 0.3, 0.5, 1.0, 2.0, 5.0, 10.0\}$, termed AD-GCL-OPT. AD-GCL-FIX assumes any information from the downstream task as unavailable while AD-GCL-OPT assumes the augmentation search space has some weak information from the downstream task. A full range of analysis on how $\lambda_{\text{reg}}$ impacts AD-GCL will be investigated in Sec.~\ref{exp:reg_analysis}. We compare AD-GCL with three unsupervised/self-supervised learning baselines for graph-level tasks, which include randomly initialized untrained GIN (RU-GIN)~\cite{xu2018how}, InfoGraph~\cite{sun2019infograph} and GraphCL~\cite{you2020graph}. Previous works~\cite{you2020graph, sun2019infograph} show that they generally outperform graph kernels~\cite{kriege2020survey,yanardag2015deep,shervashidze2011weisfeiler} and network embedding methods~\cite{grover2016node2vec, perozzi2014deepwalk, narayanan2017graph2vec,adhikari2018sub2vec}.

We also adopt GCL with GDA based on non-adversarial edge dropping (NAD-GCL) for ablation study. NAD-GCL drops the edges of a graph uniformly at random. We consider NAD-GCL-FIX and NAD-GCL-OPT with different edge drop ratios. NAD-GCL-GCL adopts the edge drop ratio of AD-GCL-FIX at the saddle point of the optimization (Eq.\ref{eq:ad-gcl-encoder-augmentor-reg}) while NAD-GCL-OPT optimally tunes the edge drop ratio over the validation datasets to match AD-GCL-OPT.
We also adopt fully supervised GIN (F-GIN) to provide an anchor of the performance. We stress that all methods adopt GIN~\cite{xu2018how} as the encoder. Except F-GIN, all methods adopt a downstream \emph{linear} classifier or regressor with the same hyper-parameters for fair comparison. Adopting \emph{linear models} was suggested by~\cite{tschannen2019mutual}, which explicitly attributes any performance gain/drop to the quality of learnt representations.  

\begin{table}[t]
\centering
\renewcommand{\arraystretch}{1.5}
\newcommand{\STAB}[1]{\begin{tabular}{@{}c@{}}#1\end{tabular}}
\resizebox{\textwidth}{!}{%
\begin{tabular}{clcccc|ccccc}
\hline
& Dataset                                      & NCI1           & PROTEINS            & MUTAG          & DD                                      & COLLAB           & RDT-B            & RDT-M5K          & IMDB-B           & IMDB-M           \\ \hline

& F-GIN & 78.27 $\pm$ 1.35 & 72.39 $\pm$ 2.76 & 90.41 $\pm$ 4.61 & 74.87 $\pm$ 3.56 & 74.82 $\pm$ 0.92 & 86.79 $\pm$ 2.04 & 53.28 $\pm$ 3.17 & 71.83 $\pm$ 1.93 & 48.46 $\pm$ 2.31 \\ \hline
\multirow{3}{*}{\STAB{\rotatebox[origin=c]{90}{Baselines}}}&RU-GIN~\cite{xu2018how}& 62.98 $\pm$ 0.10 & 69.03 $\pm$ 0.33 & 87.61 $\pm$ 0.39 & 74.22 $\pm$ 0.30  & 63.08 $\pm$ 0.10 & 58.97 $\pm$ 0.13 & 27.52 $\pm$ 0.61 & 51.86 $\pm$ 0.33 & 32.81 $\pm$ 0.57 \\
& InfoGraph~\cite{sun2019infograph}& 68.13 $\pm$ 0.59 & 72.57 $\pm$ 0.65 & 87.71 $\pm$ 1.77 & 75.23 $\pm$ 0.39                                     & 70.35 $\pm$ 0.64 & 78.79 $\pm$ 2.14 & 51.11 $\pm$ 0.55 & 71.11 $\pm$ 0.88 & 48.66 $\pm$ 0.67 \\
& GraphCL ~\cite{you2020graph}& 68.54 $\pm$ 0.55 & 72.86 $\pm$ 1.01 & 88.29 $\pm$ 1.31 & 74.70 $\pm$ 0.70                                     & 71.26 $\pm$ 0.55 & 82.63 $\pm$ 0.99 & 53.05 $\pm$ 0.40 & 70.80 $\pm$ 0.77 & 48.49 $\pm$ 0.63 \\\hline
\multirow{2}{*}{\STAB{\rotatebox[origin=c]{90}{AB-S}}}& NAD-GCL-FIX & 69.23 $\pm$ 0.60 & 72.81 $\pm$ 0.71 & 88.58 $\pm$ 1.58 & 74.55 $\pm$ 0.55              & 71.56 $\pm$ 0.58 & 83.41 $\pm$ 0.66 & 52.72 $\pm$ 0.71 & 70.94 $\pm$ 0.77 & 48.33 $\pm$ 0.47 \\
& NAD-GCL-OPT & 69.30 $\pm$ 0.32 & 73.18 $\pm$ 0.71 & 89.05 $\pm$ 1.06 & 74.55 $\pm$ 0.55             & 72.04 $\pm$ 0.67 & 83.74 $\pm$ 0.76 & 53.43 $\pm$ 0.26 & 71.94 $\pm$ 0.59 & 49.01 $\pm$ 0.93 \\\hline\hline
\multirow{2}{*}{\STAB{\rotatebox[origin=c]{90}{Ours}}} 

& AD-GCL-FIX & \bf{69.67 $\pm$ 0.51}$^\star$ & \bf{73.59 $\pm$ 0.65} & \bf{89.25 $\pm$ 1.45} & 74.49 $\pm$ 0.52 & \textbf{73.32 $\pm$ 0.61}$^\star$ & \textbf{85.52 $\pm$ 0.79}$^\star$ & 53.00 $\pm$ 0.82 & \textbf{71.57 $\pm$ 1.01} & \textbf{49.04 $\pm$ 0.53} \\ 

& AD-GCL-OPT & \bf{69.67 $\pm$ 0.51}$^\star$ & \bf{73.81 $\pm$ 0.46}$^\star$ & \textbf{89.70 $\pm$ 1.03} & 75.10 $\pm$ 0.39 & \textbf{73.32 $\pm$ 0.61}$^\star$ & \textbf{85.52 $\pm$ 0.79}$^\star$ & \textbf{54.93 $\pm$ 0.43}$^\star$ & \textbf{72.33 $\pm$ 0.56}$^\star$ & \textbf{49.89 $\pm$ 0.66}$^\star$ \\ \hline\hline
\end{tabular}%
}\\
\vspace{3mm}
\resizebox{\textwidth}{!}{%
\begin{tabular}{clcccc|ccccc}
\hline
& Task                                         & \multicolumn{4}{c|}{Regression (Downstream Classifier - Linear Regression + L2)} & \multicolumn{5}{c}{Classification (Downstream Classifier - Logistic Regression + L2)}       \\ \cline{1-11} 
& Dataset                                      & molesol            & mollipo           & molfreesolv        & \multicolumn{1}{|c|}{ZINC-10K}           & molbace          & molbbbp          & molclintox       & moltox21         & molsider         \\
& Metric                                       & \multicolumn{3}{c}{RMSE (shared) ($\downarrow$)}                     & \multicolumn{1}{|c|}{MAE ($\downarrow$) }& \multicolumn{5}{c}{ROC-AUC \% (shared) ($\uparrow$)}                                               \\ \hline

&F-GIN                         & 1.173 $\pm$ 0.057  & 0.757 $\pm$ 0.018 & 2.755 $\pm$ 0.349  & \multicolumn{1}{|c|}{0.254 $\pm$ 0.005}  & 72.97 $\pm$ 4.00 & 68.17 $\pm$ 1.48 & 88.14 $\pm$ 2.51 & 74.91 $\pm$ 0.51 & 57.60 $\pm$ 1.40 \\ \hline
\multirow{3}{*}{\STAB{\rotatebox[origin=c]{90}{Baselines}}} & RU-GIN~\cite{xu2018how}                              & 1.706 $\pm$ 0.180  & 1.075 $\pm$ 0.022 & 7.526 $\pm$ 2.119  & \multicolumn{1}{|c|}{0.809 $\pm$ 0.022}  & 75.07 $\pm$ 2.23 & 64.48 $\pm$ 2.46 & 72.29 $\pm$ 4.15 & 71.53 $\pm$ 0.74 & 62.29 $\pm$ 1.12 \\

&InfoGraph~\cite{sun2019infograph}                                    & 1.344 $\pm$ 0.178  & 1.005 $\pm$ 0.023 & 10.005 $\pm$ 4.819 & \multicolumn{1}{|c|}{0.890 $\pm$ 0.017}  & 74.74 $\pm$ 3.64 & 66.33 $\pm$ 2.79 & 64.50 $\pm$ 5.32 & 69.74 $\pm$ 0.57 & 60.54 $\pm$ 0.90 \\

&GraphCL~\cite{you2020graph}                                      & 1.272 $\pm$ 0.089  & 0.910 $\pm$ 0.016 & 7.679 $\pm$ 2.748  & \multicolumn{1}{|c|}{0.627 $\pm$ 0.013}  & 74.32 $\pm$ 2.70 & 68.22 $\pm$ 1.89 & 74.92 $\pm$ 4.42 & 72.40 $\pm$ 1.01 & 61.76 $\pm$ 1.11 \\ 

\hline
\multirow{2}{*}{\STAB{\rotatebox[origin=c]{90}{AB-S}}} & NAD-GCL-FIX                   & 1.392 $\pm$ 0.065  & 0.952 $\pm$ 0.024 & 5.840 $\pm$ 0.877  & \multicolumn{1}{|c|}{0.609 $\pm$ 0.010}  & 73.60 $\pm$ 2.73 & 66.12 $\pm$ 1.80 & 73.32 $\pm$ 3.66 & 71.65 $\pm$ 0.94 & 60.41 $\pm$ 1.48 \\ 

&NAD-GCL-OPT             & 1.242 $\pm$ 0.096  & 0.897 $\pm$ 0.022 & 5.840 $\pm$ 0.877  & \multicolumn{1}{|c|}{0.609 $\pm$ 0.010}  & 73.69 $\pm$ 3.67 & 67.70 $\pm$ 1.78 & 74.40 $\pm$ 4.92 & 71.65 $\pm$ 0.94 & 61.14 $\pm$ 1.43 \\ 

\hline\hline
\multirow{2}{*}{\STAB{\rotatebox[origin=c]{90}{Ours}}}&AD-GCL-FIX     & \textbf{1.217 $\pm$ 0.087}  & \textbf{0.842 $\pm$ 0.028}$^\star$ & \textbf{5.150 $\pm$ 0.624}$^\star$  & \multicolumn{1}{|c|}{\textbf{0.578 $\pm$ 0.012}$^\star$} & \textbf{76.37 $\pm$ 2.03} & 68.24 $\pm$ 1.47 & \textbf{80.77 $\pm$ 3.92} & 71.42 $\pm$ 0.73 & \textbf{63.19 $\pm$ 0.95} \\

&AD-GCL-OPT & \textbf{1.136 $\pm$ 0.050}$^\star$  & \textbf{0.812 $\pm$ 0.020}$^\star$ & \textbf{4.145 $\pm$ 0.369}$^\star$ & \multicolumn{1}{|c|}{\textbf{0.544 $\pm$ 0.004}$^\star$}  & \textbf{77.27 $\pm$ 2.56} & \textbf{69.54 $\pm$ 1.92} & \textbf{80.77 $\pm$ 3.92} & \textbf{72.92 $\pm$ 0.86} & \textbf{63.19 $\pm$ 0.95} \\ \hline \hline
\end{tabular}%
}

\caption{\small{Unsupervised learning performance for (TOP) biochemical and social network classification in TU datasets~\cite{Morris+2020} (Averaged accuracy $\pm$ std. over 10 runs) and (BOTTOM) chemical molecules property prediction in OGB datasets~\cite{hu2020open} (mean $\pm$ std. over  10 runs). %
\textbf{Bold}/\textbf{Bold}$^\star$ indicats our methods outperform baselines with $\geq$ 0.5/$\geq$ 2 std respectively. Fully supervised (F-GIN) results are shown \textbf{only} for placing GRL methods in perspective. Ablation-study (AB-S) results do not count as baselines.}}
\label{tab:unsupervised_learning_ogbg}
\vspace{-5mm}
\end{table}

Tables~\ref{tab:unsupervised_learning_ogbg} 
show the results for unsupervised graph level property prediction in social and chemical domains respectively.  
We witness the big performance gain of AD-GCL as opposed to all baselines across all the datasets. Note GraphCL utilizes extensive evaluation to select the best combination of augmentions over a broad GDA family including node-dropping, edge dropping and subgraph sampling. Our results indicate that such extensive evaluation may not be necessary while optimizing  the augmentation strategy in an adversarial way is greatly beneficial.

We stress that edge dropping is not cherry picked as the search space of augmentation strategies. Other search spaces may even achieve better performance, while an extensive investigation is left for the future work. 

Moreover, AD-GCL also clearly improves upon the performance against its non-adversarial counterparts (NAD-GCL) across all the datasets, which further demonstrates stable and significant advantages of the AD-GCL principle. Essentially, the input-graph-dependent augmentation learnt by AD-GCL yields much benefit.
Finally, we compare AD-GCL-FIX with AD-GCL-OPT. 
Interestingly, two methods achieve comparable results though AD-GCL-OPT is sometimes better. This observation implies that the AD-GCL principle may be robust to the choice of $\lambda_{\text{reg}}$ and thus motivates the analysis in the next subsection. Moreover, weak information from the downstream tasks indeed help with controlling the search space and further betters the performance. We also list the optimal $\lambda_{\text{reg}}$'s of AD-GCL-OPT for different datasets in Appendix~\ref{apd:more_results_opt_reg} for the purpose of comparison and reproduction.
\subsubsection{Note on the linear downstream classifier}
We find that the choice of the downstream classifier can significantly affect the evaluation of the self-supervised representations. InfoGraph~\cite{sun2019infograph} and GraphCL~\cite{you2020graph} adopt a non-linear SVM model as the downstream classifier. Such a non-linear model is more powerful than the linear model we adopt and thus causes some performance gap between the results showed in Table~\ref{tab:unsupervised_learning_ogbg} (TOP) and (BOTTOM) and their original results (listed in Appendix~\ref{apd:unsup_nonlinear} as Table~\ref{tab:unsup_tu_non_linear_eval_results}). We argue that using a non-linear SVM model as the downstream classifier is unfair, because the performance of even a randomly initialized untrained GIN (RU-GIN) is significantly improved  (comparing results from Table~\ref{tab:unsupervised_learning_ogbg} (TOP) to Table~\ref{tab:unsup_tu_non_linear_eval_results} ). Therefore, we argue for adopting a linear classifier protocol as suggested by~\cite{tschannen2019mutual}.
That having been said, our methods (both AD-GCL-FIX and AD-GCL-OPT) still performs significantly better than baselines in most cases, even when a non-linear SVM classifer is adopted, as shown in Table~\ref{tab:unsup_tu_non_linear_eval_results}. Several relative gains are there no matter whether the downstream classifier is a simple linear model (Tables~\ref{tab:unsupervised_learning_ogbg}) or a non-linear SVM model (Table~\ref{tab:unsup_tu_non_linear_eval_results}). AD-GCL methods significantly outperform InfoGraph in 5 over 8 datasets and GraphCL in 6 over 8 datasets. This further provides the evidence for the effectiveness of our method. Details on the practical benefits of linear downstream models can be found in Appendix~\ref{apd:unsup_nonlinear}.

\vspace{-3mm}
\subsection{Analysis of Regularizing the GDA Model}
\vspace{-2mm}
\label{exp:reg_analysis}

\begin{figure}[t]
    \centering
    \includegraphics[width=\textwidth]{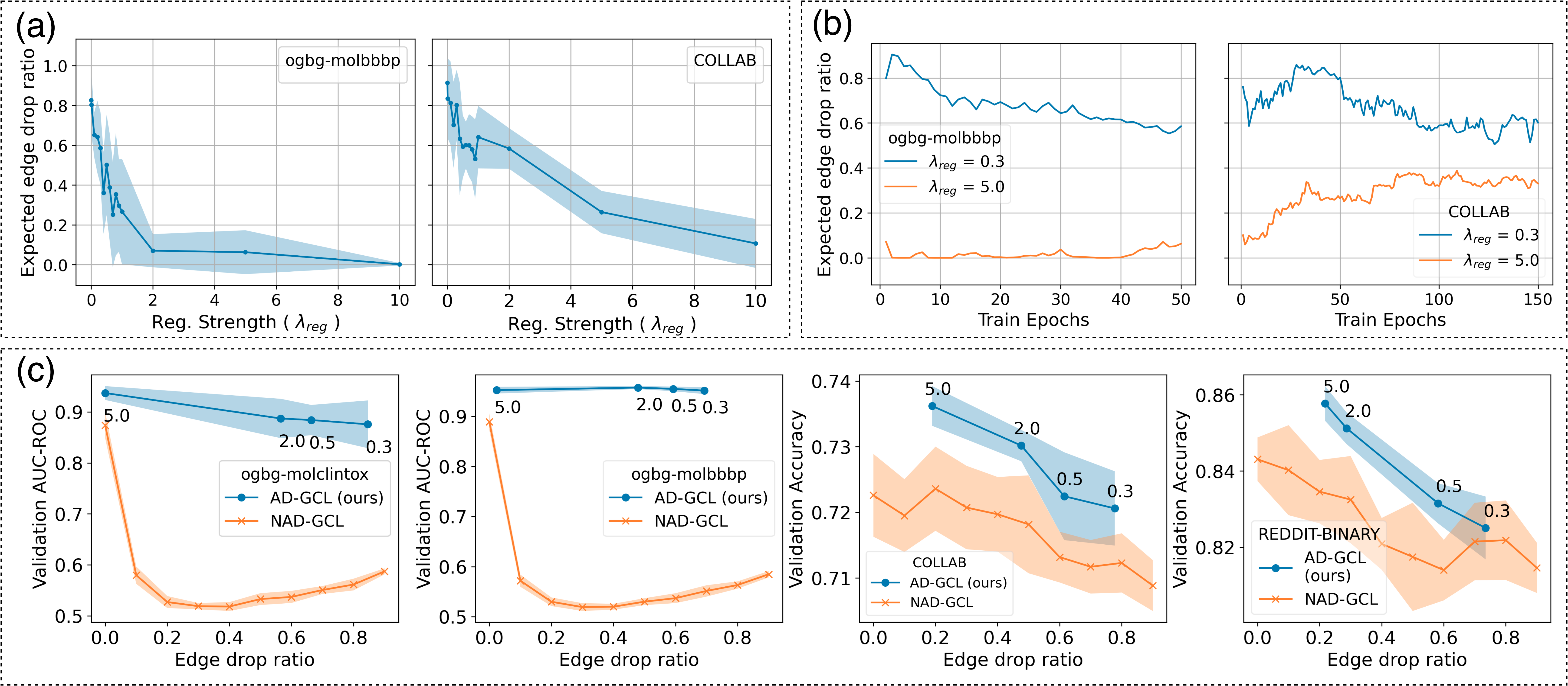}
    \caption{\small{(a) $\lambda_{\text{reg}}$ \textit{v.s.} expected edge drop ratio $\mathbb{E}_\mathcal{G}[\sum_{e} \omega_e/|E|]$ (measured at saddle point of Eq.\ref{eq:ad-gcl-encoder-augmentor-reg}). (b) Training dynamics of expected drop ratio for $\lambda_{\text{reg}}$. (c) Validation performance for graph classification \textit{v.s.} edge drop ratio. Compare AD-GCL and GCL with non-adversarial edge dropping. The markers on AD-GCL's performance curves show the $\lambda_{\text{reg}}$ used.}}
    \label{fig:reg_term_analysis}
    \vspace{-4mm}
\end{figure}

Here, we study how different $\lambda_{\text{reg}}$'s impact the expected edge drop ratio of AD-GCL at the saddle point of Eq.\ref{eq:ad-gcl-encoder-augmentor-reg} and further impact the model performance on the validation datasets. Due to the page limitation, we focus on classification tasks in the main text while leaving the discussion on regression tasks in the Appendix~\ref{apd:more_results_reg_analysis}. Figure~\ref{fig:reg_term_analysis} 
shows the results.

As shown in Figure~\ref{fig:reg_term_analysis}(a), a large $\lambda_{\text{reg}}$ tends to yield a small expected edge drop ratio at the convergent point, which matches our expectation. $\lambda_{\text{reg}}$ ranging from 0.1 to 10.0 corresponds to dropping almost everything (80\% edges) to nothing (<10\% edges). The validation performance in 
Figure~\ref{fig:reg_term_analysis}(c) is out of our expectation. We find that for classification tasks, the performance of the encoder is extremely robust to different choices of $\lambda_{\text{reg}}$'s when trained w.r.t. the AD-GCL principle, though the edge drop ratios at the saddle point are very different. However, the non-adversarial counterpart NAD-GCL is sensitive to different edge drop ratios, especially on the molecule dataset (e.g., ogbg-molclitox, ogbg-molbbbp). We actually observe the similar issue of NAD-GCL across all molecule datasets (See Appendix~\ref{apd:more_results_perf_vs_edge_drop}). More interesting aspects of our results appear at the extreme cases. When $\lambda_{\text{reg}}\geq 5.0$, the convergent edge drop ratio is close to 0, which means no edge dropping, but AD-GCL still significantly outperforms naive GCL with small edge drop ratio. When $\lambda_{\text{reg}}=0.3$, the convergent edge drop ratio is greater than 0.6, which means dropping more than half of the edges, but AD-GCL still keeps reasonable performance. We suspect that such benefit comes from the training dynamics of AD-GCL (examples as shown in Figure~\ref{fig:reg_term_analysis}(b)). Particularly, optimizing augmentations allows for non-uniform edge-dropping probability. During the optimization procedure, AD-GCL pushes high drop probability on redundant edges while low drop probability on critical edges, which allows the encoder to differentiate redundant and critical information. This cannot be fully explained by the final convergent edge drop ratio 
 and motivates future investigation of AD-GCL from a more in-depth theoretical perspective. 
\vspace{-3mm}
\subsection{Transfer Learning}
\vspace{-2mm}
\label{exp:transfer}
\begin{table}[t]
\centering
\renewcommand{\arraystretch}{1.25}
\newcommand{\STAB}[1]{\begin{tabular}{@{}c@{}}#1\end{tabular}}
\resizebox{\textwidth}{!}{%
\begin{tabular}{lcccccccc|c}
\hline
Pre-Train Dataset   & \multicolumn{8}{c|}{\begin{tabular}[c]{@{}c@{}} ZINC 2M\end{tabular}}                                                                                                                              & \begin{tabular}[c]{@{}c@{}}PPI-306K\end{tabular} \\ \hline
Fine-Tune Dataset    & BBBP                                & Tox21                                & SIDER                                & ClinTox                              & BACE                                 & \multicolumn{1}{c}{HIV} & MUV              & ToxCast & PPI                                                                                       \\ \hline
No Pre-Train & 65.8 $\pm$ 4.5                      & 74.0 $\pm$ 0.8                       & 57.3 $\pm$ 1.6                       & 58.0 $\pm$ 4.4                       & 70.1 $\pm$ 5.4                       & 75.3 $\pm$ 1.9          & 71.8 $\pm$ 2.5   &  63.4 $\pm$ 0.6 & 64.8 $\pm$ 1.0                                                                            \\
EdgePred~\cite{hu2019strategies}     & 67.3 $\pm$ 2.4                      & 76.0 $\pm$ 0.6                       & 60.4 $\pm$ 0.7                       & 64.1 $\pm$ 3.7                       & 79.9 $\pm$ 0.9                       & 76.3 $\pm$ 1.0          & 74.1 $\pm$ 2.1   & 64.1$\pm $ 0.6 & 65.7 $\pm$ 1.3                                                                            \\
AttrMasking~\cite{hu2019strategies}  & 64.3 $\pm$ 2.8                      & 76.7 $\pm$ 0.4                       & 61.0 $\pm$ 0.7                       & 71.8 $\pm$ 4.1                       & 79.3 $\pm$ 1.6                       & 77.2 $\pm$ 1.1          & 74.7 $\pm$ 1.4   & 64.2 $\pm$ 0.5 & 65.2 $\pm$ 1.6                                                                            \\
ContextPred~\cite{hu2019strategies}  & 68.0 $\pm$ 2.0                      & 75.7 $\pm$ 0.7                       & 60.9 $\pm$ 0.6                       & 65.9 $\pm$ 3.8                       & 79.6 $\pm$ 1.2                       & 77.3 $\pm$ 1.0          & 75.8 $\pm$ 1.7   & 63.9 $\pm$ 0.6& 64.4 $\pm$ 1.3                                                                            \\
InfoGraph~\cite{sun2019infograph}    & 68.8 $\pm$ 0.8                      & 75.3 $\pm$ 0.5                       & 58.4 $\pm$ 0.8                       & 69.9 $\pm$3.0                        & 75.9 $\pm$ 1.6                       & 76.0 $\pm$ 0.7          & 75.3 $\pm$ 2.5   & 62.7 $\pm$ 0.4 & 64.1 $\pm$ 1.5                                                                            \\
GraphCL~\cite{you2020graph}      & 69.68 $\pm$ 0.67                    & 73.87 $\pm$ 0.66                     & 60.53 $\pm$ 0.88                     & 75.99 $\pm$ 2.65                     & 75.38 $\pm$ 1.44                     & 78.47 $\pm$ 1.22        & 69.8 $\pm$ 2.66  &  62.40 $\pm$ 0.57 & 67.88 $\pm$ 0.85                                                                          \\ \hline\hline
AD-GCL-FIX         & \multicolumn{1}{l}{70.01 $\pm$1.07} & \multicolumn{1}{l}{76.54 $\pm$ 0.82} & \multicolumn{1}{l}{\textbf{63.28 $\pm$ 0.79}} & \multicolumn{1}{l}{\textbf{79.78 $\pm$ 3.52}} & \multicolumn{1}{l}{78.51 $\pm$ 0.80} & 78.28 $\pm$ 0.97        & 72.30 $\pm$ 1.61 & 63.07 $\pm$ 0.72 & \textbf{68.83 $\pm$ 1.26} \\
\centering{Our Ranks} & 1 & 2 & 1 & 1 & 4 & 2 & 5 & 5 &  1 \\
\hline\hline
\end{tabular}%
}
\caption{\small{Transfer learning performance for chemical molecules property prediction (mean ROC-AUC $\pm$ std. over  10 runs). \textbf{Bold} indicates our methods outperform baselines with $\geq$ 0.5 std..} }
\vspace{-5mm}
\label{tab:transfer_learning}
\end{table}
Next, we evaluate the GNN encoders trained by AD-GCL on transfer learning to predict chemical molecule properties and biological protein functions. We follow the setting in~\cite{hu2019strategies} and use the same datasets: GNNs are pre-trained on one dataset using self-supervised learning and later fine-tuned on another dataset to test out-of-distribution performance. Here, we only consider AD-GCL-FIX as AD-GCL-OPT is only expected to have better performance. We adopt baselines including no pre-trained GIN (\textit{i.e.,} without self-supervised training on the first dataset and with only fine-tuning), InfoGraph~\cite{sun2019infograph}, GraphCL~\cite{you2020graph}, three different pre-train strategies in~\cite{hu2019strategies} including edge prediction, node attribute masking and context prediction that utilize edge, node and subgraph context respectively. More detailed setup is given in Appendix~\ref{apd:exp_settings}. 

According to Table~\ref{tab:transfer_learning}, AD-GCL-FIX significantly outperforms baselines in 3 out of 9 datasets and achieves a mean rank of 2.4 across these 9 datasets which is better than all baselines. Note that although AD-GCL only achieves 5th on some datasets, AD-GCL still significantly outperforms InfoGraph~\cite{sun2019infograph} and GraphCL~\cite{you2020graph}, both of which are strong GNN self-training baselines. In contrast to InfoGraph~\cite{sun2019infograph} and GraphCL~\cite{you2020graph}, AD-GCL achieves some performance much closer to those baselines (EdgePred, AttrMasking and ContextPred) based on domain knowledge and extensive evaluation in~\cite{hu2019strategies}. This is rather significant as our method utilizes only edge dropping GDA, which again shows the effectiveness of the AD-GCL principle.

\vspace{-2mm}
\subsection{Semi-Supervised Learning}
\vspace{-1mm}
\label{exp:semisup}
\begin{table}[t]
\centering
\renewcommand{\arraystretch}{1.25}
\resizebox{0.8\textwidth}{!}{%
\begin{tabular}{lccc|ccc}
\hline
Dataset            & NCI1             & PROTEINS         & DD               & COLLAB           & RDT-B            & RDT-M5K          \\ \hline
No Pre-Train      & 73.72 $\pm$ 0.24 & 70.40 $\pm$ 1.54 & 73.56 $\pm$ 0.41 & 73.71$\pm$ 0.27  & 86.63 $\pm$ 0.27 & 51.33 $\pm$ 0.44 \\
SS-GCN-A & 73.59 $\pm$ 0.32 & 70.29 $\pm$ 0.64 & 74.30 $\pm$ 0.81 & 74.19 $\pm$ 0.13 & 87.74 $\pm$ 0.39 & 52.01 $\pm$ 0.20 \\
GAE~\cite{kipf2016variational}           & 74.36 $\pm$ 0.24 & 70.51 $\pm$ 0.17 & 74.54 $\pm$ 0.68 & 75.09 $\pm$ 0.19 & 87.69 $\pm$ 0.40 & 53.58 $\pm$ 0.13 \\
InfoGraph~\cite{sun2019infograph}     & 74.86 $\pm$ 0.26 & 72.27 $\pm$ 0.40 & 75.78 $\pm$ 0.34 & 73.76 $\pm$ 0.29 & 88.66 $\pm$ 0.95 & 53.61 $\pm$ 0.31 \\
GraphCL~\cite{you2020graph}      & 74.63 $\pm$ 0.25 & 74.17 $\pm$ 0.34 & 76.17 $\pm$ 1.37 & 74.23 $\pm$ 0.21 & 89.11 $\pm$ 0.19 & 52.55 $\pm$ 0.45 \\ \hline
AD-GCL-FIX        & \textbf{75.18 $\pm$ 0.31} &  73.96 $\pm$ 0.47 & \textbf{77.91 $\pm$ 0.73}$^\star$ & \textbf{75.82 $\pm$ 0.26}$^\star$ & \textbf{90.10 $\pm$ 0.15}$^\star$ & 53.49 $\pm$ 0.28 \\ 
Our Ranks & 1 &  2  & 1 & 1 & 1  &  3 \\
\hline
\end{tabular}%
}
\caption{\small{Semi-supervised learning performance with 10\% labels on TU datasets~\cite{Morris+2020} (10-Fold Accuracy (\%)$\pm$ std over 5 runs). \textbf{Bold}/\textbf{Bold}$^\star$ indicate our methods outperform baselines with $\geq$ 0.5 std/ $\geq$ 2 std respectively.} } 
\label{tab:semi_supervised_learning}
\vspace{-5mm}
\end{table}
Lastly, we evaluate AD-GCL on semi-supervised learning for graph classification on the benchmark TU datasets~\cite{Morris+2020}. We follow the setting in \cite{you2020graph}:  GNNs are pre-trained on one dataset using self-supervised learning and later fine-tuned based on 10\% label supervision on the same dataset.  Again, we only consider AD-GCL-FIX and compare it with several baselines in \cite{you2020graph}: 1) no pre-trained GCN, which is directly trained by the 10\% labels from scratch, 2) SS-GCN-A, a baseline that introduces more labelled data by creating random augmentations and then gets trained from scratch, 3) a predictive method GAE~\cite{kipf2016variational} that utilizes adjacency reconstruction in the pre-training phase, and GCL methods, 4) InfoGraph~\cite{sun2019infograph} and 5) GraphCL~\cite{you2020graph}. Note that here we have to keep the encoder architecture same and thus  AD-GCL-FIX adopts GCN as the encoder. Table~\ref{tab:semi_supervised_learning} shows the results. AD-GCL-FIX significantly outperforms baselines in 3 out of 6 datasets and achieves a mean rank of 1.5 across these 6 datasets, which again demonstrates the strength of AD-GCL. 
\vspace{-3mm}
\section{Conclusions}
\vspace{-2mm}
    In this work we have developed a theoretically motivated, novel principle: \textit{AD-GCL} that goes a step beyond the conventional InfoMax objective for self-supervised learning of GNNs. The optimal GNN encoders that are agnostic to the downstream tasks are the ones that capture the minimal sufficient information to identify each graph in the dataset. To achieve this goal, AD-GCL suggests to better graph contrastive learning via optimizing graph augmentations in an adversarial way.  Following this principle, we developed a practical instantiation based on learnable edge dropping. We have extensively analyzed and demonstrated the benefits of AD-GCL and its instantiation with real-world datasets for graph property prediction in unsupervised, transfer and semi-supervised learning settings.

\begin{ack}
We greatly thank the actionable suggestions given by reviewers and the area chair.
S.S. and J.N. are supported by the National Science Foundation under contract numbers CCF-1918483 and IIS-1618690. P.L. is partly supported by the 2021 JP Morgan Faculty Award and the National Science Foundation (NSF) award HDR-2117997.
\end{ack}



\bibliographystyle{ACM.bst}
\bibliography{neurips_2021}


\newpage
\appendix
\section{Summary of the Appendix}

In the appendix, we provide the detailed proof of the Theorem~\ref{thm:main} (Sec.~\ref{apd:prf}), a review of WL tests (Sec.~\ref{apd:1wl}), the detailed algorithmic format of our instantiation of AD-GCL (Sec.~\ref{apd:algo}), the summary of datasets (Sec.~\ref{apd:datasets}), more regularization hyperparameter analysis (Sec.~\ref{apd:more_results}), detailed experimental settings and complete evaluation results (Sec.~\ref{apd:exp_settings}), computing resources (Sec.~\ref{apd:compute_resource}) and discussion on broader impacts (Sec.\ref{apd:broader_impact}).

\section{Proof of Theorem~\ref{thm:main}}
\label{apd:prf}
We repeat Theorem~\ref{thm:main} as follows.
\begin{theorem} 
Suppose the encoder $f$ is implemented by a GNN as powerful as the 1-WL test. Suppose $\mathcal{G}$ is a countable space and thus $\mathcal{G'}$ is a countable space. Then, the optimal solution $(f^*, T^*)$ to AD-GCL satisfies, letting $T'^{*}(G') = \mathbb{E}_{G\sim \mathbb{P}_{\mathcal{G}}}[T^*(G)| G\cong G']$, \vspace{-1mm}

\begin{enumerate}[leftmargin=*]
    \item $I(f^*(t^*(G)); G\,|\,Y) \leq  \min_{T\in\mathcal{T}} I(t'(G'); G') - I(t'^*(G'); Y) $, where $t^*(G)\sim T^*(G)$, $t'(G')\sim T'(G')$, $t'^*(G')\sim T'^{*}(G')$, $(G,Y)\sim \mathbb{P}_{\mathcal{G}\times \mathcal{Y}}$ and $(G',Y)\sim \mathbb{P}_{\mathcal{G}'\times \mathcal{Y}}$.
    \item $I(f^*(G); Y)\geq I(f^*(t'^*(G')); Y) = I(t'^*(G'); Y)$, where $t^*(G)\sim T^*(G)$, $t'^*(G')\sim T'^*(G')$, $(G,Y)\sim \mathbb{P}_{\mathcal{G}\times \mathcal{Y}}$ and $(G',Y)\sim \mathbb{P}_{\mathcal{G}'\times \mathcal{Y}}$.
\end{enumerate}
\vspace{-1mm}
\end{theorem}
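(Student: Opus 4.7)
The plan is to reduce the whole problem to the $1$-WL quotient space $\mathcal{G}'$ via the encoder, argue that at the optimum the augmenter may be taken to be $\mathcal{G}'$-measurable, and then read off both statements from standard chain-rule and data-processing identities.

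First I would argue that any $1$-WL-powerful GNN factors as $f = \tilde{f}\circ q$, where $q:\mathcal{G}\to\mathcal{G}'$ is the quotient map. Because $\mathcal{G}'$ is countable, the maximizing $\tilde{f}$ in the inner problem can be taken injective, so that $I(f(G);f(t(G))) = I(\tilde{f}(G');\tilde{f}([t(G)]_\cong)) = I(G';[t(G)]_\cong)$. Next, using the definition $T'(G')=\mathbb{E}_{G\sim\mathbb{P}_{\mathcal{G}}}[T(G)\mid G\cong G']$, a direct computation shows that the joint law of $(G',[t(G)]_\cong)$ under $G\sim\mathbb{P}_{\mathcal{G}},\,t(G)\sim T(G)$ coincides with the joint law of $(G',t'(G'))$ under $G'\sim\mathbb{P}_{\mathcal{G}'},\,t'(G')\sim T'(G')$. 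Hence the AD-GCL saddle value equals $\min_{T\in\mathcal{T}}I(G';t'(G'))$, attained at $T^*$, and $I(G';t'^*(G'))=\min_{T\in\mathcal{T}}I(t'(G');G')$.

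Since the saddle value depends on $T$ only through its $\mathcal{G}'$-average $T'$, I would, without loss of generality, replace $T^*$ by its $\mathcal{G}'$-measurable representative (so that $T^*(G)$ depends on $G$ only via $[G]_\cong$). Writing $\tilde{t}:=[t^*(G)]_\cong$, this choice yields the two conditional independences $\tilde{t}\perp G\mid G'$ and $\tilde{t}\perp Y\mid G'$, which in turn give (i) $I(\tilde{t};G)=I(\tilde{t};G')$, (ii) $I(\tilde{t};Y)=I(t'^*(G');Y)$ (the joints $(\tilde{t},Y)$ and $(t'^*(G'),Y)$ literally agree), and (iii) the always-valid Markov chain $Y-G-\tilde{t}$ gives $I(\tilde{t};Y\mid G)=0$. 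For statement~1, by injectivity of $\tilde{f}^*$ on $\mathcal{G}'$ I have $I(f^*(t^*(G));G\mid Y)=I(\tilde{t};G\mid Y)$; chain-rule manipulation with (iii) collapses this to $I(\tilde{t};G)-I(\tilde{t};Y)$, and substituting (i), (ii) and the saddle-point identity yields exactly $\min_{T\in\mathcal{T}}I(t'(G');G')-I(t'^*(G');Y)$. For statement~2, injectivity of $\tilde{f}^*$ gives $I(f^*(G);Y)=I(G';Y)$ and $I(f^*(t'^*(G'));Y)=I(t'^*(G');Y)$, and the remaining inequality $I(G';Y)\ge I(t'^*(G');Y)$ is just data processing applied to the Markov chain $Y-G'-t'^*(G')$.

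The main obstacle I anticipate is the ``without loss of generality'' step, which needs $\mathcal{T}$ to contain (or at least realize the same objective value as) the $\mathcal{G}'$-averaged $T'^*$. If this is not automatic in the paper's setting, the argument instead must show directly that $I(\tilde{t};G\mid G')\le I(\tilde{t};Y)-I(t'^*(G');Y)$ for a general optimal $T^*$; verifying this inequality in full generality is the technically delicate piece, and would likely use the observation that, conditional on $G'$, any extra $G$-dependence inherited by $\tilde{t}$ induces at least as much additional mutual information with $Y$ as it does with $G$ (since $Y-G-\tilde{t}$ keeps all the ``extra'' dependence routed through $G$).
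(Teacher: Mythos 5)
Your high-level route is the same as the paper's: push everything into the $1$-WL quotient $\mathcal{G}'$, use that the optimal encoder is a.e.\ injective there, and close with chain-rule and data-processing identities. The identity $I(f(G);f(t(G)))=I(G';[t(G)]_\cong)$ and the observation that the joint law of $(G',[t(G)]_\cong)$ agrees with $(G',t'(G'))$ are exactly the paper's Eq.~(15)--(16); your ``saddle value $=\min_T I(G';t'(G'))$'' is its Eq.~(16). For statement~2, your chain $I(f^*(G);Y)=I(G';Y)\ge I(t'^*(G');Y)=I(f^*(t'^*(G'));Y)$ is a slight overclaim: the paper only establishes injectivity of $f^*$ over $\mathcal{G}'\cap\operatorname{Supp}(\mathbb{E}_{G}[T^*(G)])$, not over all of $\mathcal{G}'$, so it instead runs the two data-processing steps $I(f^*(G);Y)\ge I(f^*(t^*(G));Y)\ge I(f^*(t'^*(G'));Y)$ before invoking injectivity, which is the safer chain.

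Where you and the paper genuinely part ways is the step you flag yourself: your ``WLOG replace $T^*$ by its $\mathcal{G}'$-measurable representative.'' The theorem is stated for the actual minimizer $T^*\in\mathcal{T}$, and $\mathcal{T}$ need not contain that $\mathcal{G}'$-averaged representative, so the replacement changes the object the theorem is about; you correctly anticipate this. The paper does not do a replacement -- it works directly with $t'^*(G')$ throughout Eq.~(14) and then passes to $t^*(G)$ via its Eq.~(17), asserting $I(f^*(t'^*(G'));G'\,|\,Y)=I(f^*(t^*(G));G'\,|\,Y)$ on the basis of injectivity alone. This is precisely the same delicate point in disguise: injectivity on the support lets you replace $f^*(\cdot)$ by its quotient-space argument, reducing the claim to $I(t'^*(G');G'\,|\,Y)=I([t^*(G)]_\cong;G'\,|\,Y)$, but that is an equality of conditional mutual informations computed under \emph{different} joints with $Y$ (the former has $t'^*(G')\perp Y\mid G'$ by construction; the latter can carry extra $Y$-dependence through $G$). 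So the paper's Eq.~(17), your WLOG, and your fallback inequality $I(\tilde{t};G\mid G')\le I(\tilde{t};Y)-I(t'^*(G');Y)$ are all instances of the same unresolved step.

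Be warned that your heuristic for the fallback inequality -- that ``any extra $G$-dependence inherited by $\tilde{t}$ induces at least as much additional mutual information with $Y$ as it does with $G$'' -- does not hold. A three-graph example makes this concrete: take $\mathcal{G}=\{G_1,G_2,G_3\}$ uniform, $G_1\cong G_2\not\cong G_3$, $Y(G_1)=Y(G_3)=0$, $Y(G_2)=1$, and $T^*$ deterministic with $T^*(G_1)=A$, $T^*(G_2)=T^*(G_3)=B$ for $A\not\cong B$. Then $I(\tilde{t};G\mid G')=\tfrac{2}{3}$~bit, while $I(\tilde{t};Y)-I(t'^*(G');Y)\approx 0.21$~bit, so the inequality fails. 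The clean way to patch all three versions of the argument is to make the natural extra hypothesis that $Y\perp G\mid G'$ (the label depends on $G$ only through its $1$-WL class) or, equivalently for this purpose, that $\mathcal{T}$ contains the $\mathcal{G}'$-averaged $T'^*$. Under either condition your proof and the paper's coincide; without it, both leave the same gap.
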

\begin{proof}
Because $\mathcal{G}$ and $\mathcal{G'}$ are countable, $P_{\mathcal{G}}$ and $P_{\mathcal{G}'}$ are defined over countable sets and thus discrete distribution. Later we may call a function $z(\cdot)$ can distinguish two graphs $G_1$, $G_2$ if $z(G_1)\neq z(G_2)$.

Moreover, for notational simplicity, we consider the following definition. Because $f^*$ is as powerful as the 1-WL test. Then, for any two graphs $G_1, G_2 \in \mathcal{G}$, $G_1\cong G_2$, $f^*(G_1) = f^*(G_2)$. We may define a mapping over $\mathcal{G}'$, also denoted by $f^*$ which simply satisfies  $f^*(G'): \triangleq f^*(G)$, where $G\cong G'$, and $G\in\mathcal{G}$ and $G'\in \mathcal{G}'$.

We first prove the statement 1, \textit{i.e.,} the upper bound. We have the following inequality: Recall that $T'^{*}(G') = \mathbb{E}_{G\sim \mathbb{P}_{\mathcal{G}}}[T^*(G)| G\cong G']$ and $t'^*(G')\sim T'^{*}(G')$.
\begin{align}
    I(t'^*(G');G') & = I(t'^*(G'); (G', Y)) - I(t'^*(G');Y|G') ]\nonumber \\
    &\stackrel{(a)}{=} I( t'^*(G'); (G', Y)) \nonumber\\
    &= I( t'^*(G'); Y) + I( t'^*(G'); G'|Y) \nonumber\\
    & \stackrel{(b)}{\geq} I( f^*(t'^*(G')); G'|Y) + I(t'^*(G');Y) \label{prf:eq1}
\end{align}
where $(a)$ is because $t'^*(G')\perp_{G'} Y$, $(b)$ is because the data processing inequality~\cite{cover1999elements}. Moreover, because $f^*$ could be as powerful as the 1-WL test and thus could be injective in $\mathcal{G}'$ a.e. with respect to the measure $\mathbb{P}_{\mathcal{G}'}$. Then, for any GDA $T(\cdot)$ and $T'(G')=\mathbb{E}_{G\sim \mathbb{P}_{\mathcal{G}}}[T(G)| G\cong G']$,
\begin{align}\label{prf:eq1p5}
    I(t'(G');G') = I(f^*(t'(G'));f^*(G')) = I(f^*(t(G));f^*(G)), 
\end{align}
where $t'(G')\sim T'(G'),\,t(G)\sim T(G)$. Here, the second equality is because $f^*(G) = f^*(G')$ and $T'(G')=\mathbb{E}_{G\sim \mathbb{P}_{\mathcal{G}}}[T(G)| G\cong G']$.

Since $T^*= \argmin_{T\in \mathcal{T}} I(f(t^*(G));f(G))$ where $t^*(G)\sim T^*(G)$ and Eq.\ref{prf:eq1p5}, we have 
\begin{align}
I(t'^*(G');G') = \argmin_{T\in \mathcal{T}} I(t'(G');G'),\, \text{where } t'(G')\sim T'(G')= \mathbb{E}_{G\sim \mathbb{P}_{\mathcal{G}}}[T(G)| G\cong G']. \label{prf:eq2}
\end{align}

Again, because by definition $f^* = \argmax_f I(f(G); f(t^*(G)))$ and $f^*$ could be as powerful as the 1-WL test, its counterpart defined over $\mathcal{G}'$, i.e., $f^\star$, must be injective over $\mathcal{G'}\cap \text{Supp}(\mathbb{E}_{G'\sim \mathbb{P}_{\mathcal{G}'}}[T’^*(G')])$ a.e. with respect to the measure $\mathbb{P}_{\mathcal{G}'}$ to achieve such mutual information maximization. Here, Supp($\mu$) defines the set where $\mu$ has non-zero measure. Because of the definition of $T'^*(G')=\mathbb{E}_{G\sim \mathbb{P}_{\mathcal{G}}}[T^*(G)| G\cong G']$, $$\mathcal{G'}\cap  \text{Supp}(\mathbb{E}_{G'\sim \mathbb{P}_{\mathcal{G}'}}[T’^*(G')]) = \mathcal{G'}\cap  \text{Supp}(\mathbb{E}_{G\sim \mathbb{P}_{\mathcal{G}}}[T^*(G)]).$$ 

Therefore, $f^*$ is a.e. injective over $\mathcal{G'}\cap \text{Supp}(\mathbb{E}_{G\sim \mathbb{P}_{\mathcal{G}}}[T^*(G)])$ and thus
\begin{align} \label{prf:eq3}
I( f^*(t'^*(G')); G'|Y) = I( f^*(t^*(G)); G'|Y), 
\end{align}
Moreover, as $f^*$ cannot  cannot distinguish more graphs in $\mathcal{G}$ than $\mathcal{G'}$ as the power of $f^*$ is limited by 1-WL test, thus,
\begin{align} \label{prf:eq3p5}
I( f^*(t^*(G)); G'|Y) = I( f^*(t^*(G)); G|Y). 
\end{align}
Plugging Eqs.\ref{prf:eq2},\ref{prf:eq3},\ref{prf:eq3p5} into Eq.\ref{prf:eq1}, we achieve
\begin{align*}
    I( f^*(t^*(G)); G|Y) \leq \argmin_{T\in \mathcal{T}} I(t'(G');G') - I(t'^*(G');Y)
\end{align*}
where $t'(G')\sim T'(G')= \mathbb{E}_{G\sim \mathbb{P}_{\mathcal{G}}}[T(G)| G\cong G']$ and $t'^*(G') \sim T'^*(G')= \mathbb{E}_{G\sim \mathbb{P}_{\mathcal{G}}}[T^*(G)| G\cong G']$, which gives us the statement 1, which is the upper bound.

We next prove the statement 2, \textit{i.e.,} the lower bound. Recall $(T^*, f^*)$ is the optimal solution to Eq.\ref{eq:ad-gcl} and $t^*(\cdot)$ denotes samples from $T^*(\cdot)$.

Again, because $f^* = \argmax_f I(f(G); f(t^*(G)))$, $f^*$ must be injective on $\mathcal{G'}\cap \text{Supp}(\mathbb{E}_{G'\sim \mathbb{P}_{\mathcal{G}'}}[T’^*(G')])$ a.e. with respect to the measure $\mathbb{P}_{\mathcal{G'}}$. Given $t'^*(G')$, $t'^*(G')\rightarrow f^*(t'^*(G'))$ is an injective deterministic mapping. Therefore, for any random variable $Q$,
\begin{align*}
    I(f^*(t'^*(G')); Q) = I(t'^*(G'); Q), \quad \text{where } G'\sim \mathbb{P_{\mathcal{G}'}}, t'^*(G')\sim T'^*(G').
\end{align*}
Of course, we may set $Q=Y$. So,
\begin{align}\label{prf:eq4}
    I(f^*(t'^*(G')); Y) = I(t'^*(G'); Y), \quad \text{where } (G',Y)\sim \mathbb{P_{\mathcal{G}'\times\mathcal{Y}}}, t'^*(G')\sim T'^*(G').
\end{align}
Because of the data processing inequality~\cite{cover1999elements} and $T'^*(G')=\mathbb{E}_{G\sim \mathbb{P}_{\mathcal{G}}}[T^*(G)| G\cong G']$, we further have 
\begin{align}\label{prf:eq5}
    I(f^*(t^*(G)); Y) \geq I(f^*(t'^*(G')); Y) ,
\end{align}
\text{where } $(G',Y)\sim \mathbb{P_{\mathcal{G}'\times\mathcal{Y}}}, (G,Y)\sim \mathbb{P_{\mathcal{G}\times\mathcal{Y}}}, t'^*(G')\sim T'^*(G'), t^*(G)\sim T^*(G).$

Further because of the data processing inequality~\cite{cover1999elements}, 
\begin{align}\label{prf:eq6}
    I(f^*(G); Y) \geq I(f^*(t^*(G)); Y). 
\end{align}
Combining Eqs.\ref{prf:eq4}, \ref{prf:eq5}, \ref{prf:eq6}, we have
\begin{align*}
     I(f^*(G); Y) \geq I(f^*(t^*(G)); Y) \geq I(f^*(t'^*(G')); Y) =  I(t'^*(G'); Y),
\end{align*}
which concludes the proof of the lower bound.

\end{proof}

\section{A Brief Review of the Weisfeiler-Lehman (WL) Test}
\label{apd:1wl}
Two graphs $G_1$ and $G_2$ are called to be isomorphic if there is a mapping between the nodes of the graphs such that their adjacencies are preserved. For a general class of graphs, without the knowledge of the mapping, determining if $G_1$ and $G_2$ are indeed isomorphic is challenging and there has been no known polynomial time algorithms utill now~\cite{babai2018groups}. The best algorithm till now has complexity $2^{O(\log n)^3}$ where $n$ is the size of the graphs of interest~\cite{helfgott2017graph}. 

The family of Weisfeiler-Lehman tests \cite{weisfeiler1968reduction} (specifically the 1-WL test) offers a very efficient way perform graph isomorphism testing by generating canonical forms of graphs. Specifically, the 1-WL test follows an iterative color refinement algorithm. Let, graph $G = (V, E)$ and let $C: V \rightarrow \mathcal{C}$ denote a coloring function that assigns each vertex $v \in V$ a color $C_v$. Nodes with different features are associated with different colors. These colors constitute the initial colors $C_0$ of the algorithm i.e. $C_{0,v} = C_v$ for every vertex $v \in V$. Now, for each vertex $v$ and each iteration $i$, the algorithm creates a new set of colors from the color $C_{i-1,v}$ and the colors $C_{i-1,u}$ of every vertex $u$ that is adjacent to $v$. This multi-set of colors is then mapped to a new color (say using a unique hash). Basically, the color refinement follows 
\begin{align}\label{eq:wl-ref}
    C_{i,v} \leftarrow \text{Hash}(C_{i-1,v}, \{C_{i-1,u|u\in \mathcal{N}_v}\}),
\end{align}
where the above Hash function is an injective mapping. This iteration goes on until when the list of colors stabilises, i.e. at some iteration $N$, no new colors are created. The final set of colors serves as the the canonical form of a graph. 

Intuitively, if the canonical forms obtained by 1-WL test for two graphs are different, then the graphs are surely not isomorphic. But, it is possible for two non-isomorphic graphs to share a the same 1-WL canonical form. Though the 1-WL test can test most of the non-isomorphic graphs, it will fail in some corner cases. For example, it cannot distinguish regular graphs with the same node degrees and of the same sizes. 

As GNNs share the same iterative procedure as the 1-WL test by comparing Eq.~\ref{eq:wl-ref} and Eq.~\ref{eq:gnn_iteration}, GNNs are proved to be at most as powerful as the 1-WL test to distinguish isormorphic graphs~\cite{xu2018powerful,morris2019weisfeiler}. However, GNNs with proper design may achieve the power of the 1-WL test~\cite{xu2018powerful} and thus the assumption in Theorem~\ref{thm:main} is reasonable.

\section{The Training Algorithm for the Instantiation of AD-GCL}
\label{apd:algo}
Algorithm~\ref{algo:ad-gcl-train} describes the self-supervised training algorithm for AD-GCL with learnable edge-dropping GDA. Note that augmenter $T_{\Phi}(\cdot)$ with parameters $\Phi$ is implemented as a GNN followed by an MLP to obtain the Bernoulli weights $\omega_e$.

\makeatletter 
\g@addto@macro{\@algocf@init}{\SetKwInOut{paramH}{Hyper-Params}} 
\makeatother
\begin{algorithm}
\setstretch{1.25}
\caption{Training Learnable Edge-Dropping GDA under AD-GCL principle.}
\label{algo:ad-gcl-train}
    \KwIn{Data $\{G_m \sim \mathcal{G} \mid m = 1, 2 \dots M\}$;\newline
    Encoder $f_{\Theta}(\cdot)$;
    Augmenter $T_{\Phi}(\cdot)$; Projection Head $g_{\Psi}(\cdot)$; Cosine Similarity $sim(\cdot)$
    }
    \paramH{Edge-Dropping Regularization Strength $\lambda_{\text{reg}}$; learning rates $\alpha, \beta$}
    \KwOut{Trained Encoder $f_{\Theta}(\cdot)$}
    \Begin{
        \For{\text{number of training} epochs}{
            \For{sampled minibatch $\{G_n = (V_n, E_n) : n = 1, 2 \dots N\}$}{
                \For{n = 1 to N}{
                    $h_{1,n} = f_{\Theta}(G_n)$
                    
                    $z_{1,n}  = g_{\Psi}(h_{1,n})$ 
                    
                    $t(G_n) \sim T_{\Phi}(G_n)$
                    
                    \textbf{set} $p_{e}, \forall e \in E_n$ \textbf{from} $t(G_n)$
                    
                    $\mathcal{R}_n = \sum_{e  \in E_n} p_{e}/|E_n|$

                    $h_{2,n} = f_{\Theta}(t(G_n))$\;
                    
                    $z_{1,n}  = g_{\Psi}(h_{2,n})$ 
                    
                }
            \textbf{define} $\mathcal{L}_n = -\log \frac{\exp({sim(z_{1,n}, z_{2,n})})}{\sum_{n\prime = 1, n\prime \neq n}^{N}\exp{({sim(z_{1,n}, z_{2,n\prime})})}}$
            
            \tcc{calculate NCE loss for minibatch}
            
            $\mathcal{L} = \frac{1}{N} \sum_{n=1}^{N}\mathcal{L}_n$
            
            \tcc{calculate regularization term for minibatch}
            
            $\mathcal{R} = \frac{1}{N} \sum_{n=1}^{N}\mathcal{R}_n$
            
            \tcc{update augmenter params via gradient ascent}
            
            $\Phi \gets \Phi + \alpha \nabla_{\Phi} (\mathcal{L} - \lambda_{\text{reg}} * \mathcal{R})$ 
            
            \tcc{update enocder \& projection head \newline
            params via gradient descent}
            
            $\Theta \gets \Theta - \beta \nabla_{\Theta} (\mathcal{L})$;\quad$\Psi \gets \Psi - \beta \nabla_{\Psi} (\mathcal{L})$

            }

        }
        \Return{Encoder $f_{\Theta}(\cdot)$}
    }

\end{algorithm}
\section{Summary of Datasets}
\label{apd:datasets}
A wide variety of datasets from different domains for a range of graph property prediction tasks are used for our experiments. Here, we summarize and point out the specific experiment setting for which they are used.
\begin{itemize}
    \item Table~\ref{tab:dataset-stats-ogbg-unsup} shows the datasets for chemical molecular property prediction which are from Open Graph Benchmark (OGB)~\cite{hu2020open} and ZINC-10K~\cite{dwivedi2020benchmarking}. These are used in the unsupervised learning setting for both classification and regression tasks. We are the first one to considering using regression tasks and the corresponding datasets in the evaluation of self-supervised GNN.
    \item Table~\ref{tab:dataset-stats-tu-unsup} shows the datasets which contain biochemical and social networks. These are taken from the TU Benchmark Datasets~\cite{Morris+2020}. We use them for graph classification tasks in both unsupervised and semi-supervised learning settings.
    \item Table~\ref{tab:dataset-stats-transfer} shows the datasets consisting of biological interactions and chemical molecules from \cite{hu2019strategies}. These datasets are used for graph classification in the transfer learning setting.
\end{itemize}

\begin{table}[htb]
\centering
\resizebox{0.8\textwidth}{!}{%
\begin{tabular}{lcccccc@{}}
\toprule
Name             & \#Graphs & Avg \#Nodes & Avg \#Edges & \#Tasks & Task Type     & Metric  \\ \midrule
ogbg-molesol     & 1,128    & 13.3        & 13.7        & 1       & Regression    & RMSE    \\
ogbg-mollipo     & 4,200    & 27.0        & 29.5        & 1       & Regression    & RMSE    \\
ogbg-molfreesolv & 642      & 8.7         & 8.4         & 1       & Regression    & RMSE    \\
ogbg-molbace     & 1,513    & 34.1        & 36.9        & 1       & Binary Class. & ROC-AUC \\
ogbg-molbbbp     & 2,039    & 24.1        & 26.0        & 1       & Binary Class. & ROC-AUC \\
ogbg-molclintox  & 1,477    & 26.2        & 27.9        & 2       & Binary Class. & ROC-AUC \\
ogbg-moltox21    & 7,831    & 18.6        & 19.3        & 12      & Binary Class. & ROC-AUC \\
ogbg-molsider    & 1,427    & 33.6        & 35.4        & 27      & Binary Class. & ROC-AUC \\ 
ZINC-10K         & 12,000   & 23.16       & 49.83       & 1       & Regression    & MAE  \\\bottomrule
\end{tabular}%

}
\caption{Summary of chemical molecular properties datasets used for unsupervised learning experiments. Datasets obtained from OGB~\cite{hu2020open} and \cite{dwivedi2020benchmarking}}
\label{tab:dataset-stats-ogbg-unsup}
\end{table}
\begin{table}[htb]
\centering
\resizebox{0.6\textwidth}{!}{%
\begin{tabular}{lcccc@{}}
\toprule
Dataset  & \multicolumn{1}{l}{\#Graphs} & \multicolumn{1}{l}{Avg. \#Nodes} & \multicolumn{1}{l}{Avg. \#Edges} & \multicolumn{1}{l}{\#Classes}\\ \midrule
\multicolumn{5}{c}{Biochemical Molecules}                                                                                       \\ \midrule
NCI1        &   4,110   &   29.87   &    32.30    &  2   \\
PROTEINS    &   1,113   &   39.06   &   72.82    &  2  \\   
MUTAG       &   188     &   17.93   &    19.79  & 2    \\  
DD          &   1,178   &   284.32  &   715.66 & 2    \\  \midrule
\multicolumn{5}{c}{Social Networks}\\   \midrule
COLLAB          &   5,000   &   74.5    & 2457.78 & 3\\
REDDIT-BINARY   &   2,000   &   429.6   &  497.75 &  2\\
REDDIT-MULTI-5K &   4,999   &   508.8   &  594.87 & 5\\
IMDB-BINARY     &   1,000   &   19.8    &  96.53  & 2\\
IMDB-MULTI      &   1,500   &   13.0    &  65.94 & 3\\\bottomrule
\end{tabular}%
}
\caption{Summary of biochemical and social networks from TU Benchmark Dataset~\cite{Morris+2020} used for unsupervised and semi-supervised learning experiments. The evaluation metric for all these datasets is Accuracy.}
\label{tab:dataset-stats-tu-unsup}
\end{table}
\begin{table}[htb]
\centering
\resizebox{0.6\textwidth}{!}{%
\begin{tabular}{lcccc@{}}
\toprule
Dataset  & \multicolumn{1}{l}{Utilization} & \multicolumn{1}{l}{\#Graphs} & \multicolumn{1}{l}{Avg. \#Nodes} & \multicolumn{1}{l}{Avg. Degree} \\ \midrule
\multicolumn{5}{c}{Protein-Protein Interaction Networks}                                                                                       \\ \midrule
PPI-306K & Pre-Training                    & 306,925                      & 39.82                            & 729.62                          \\
PPI      & Finetuning                      & 88,000                       & 49.35                            & 890.77                          \\ \midrule
\multicolumn{5}{c}{Chemical Molecules}                                                                                                         \\ \midrule
ZINC-2M  & Pre-Training                    & 2,000,000                    & 26.62                            & 57.72                           \\
BBBP     & Finetuning                      & 2,039                        & 24.06                            & 51.90                           \\
Tox21    & Finetuning                      & 7,831                        & 18.57                            & 38.58                           \\
SIDER    & Finetuning                      & 1,427                        & 33.64                            & 70.71                           \\
ClinTox  & Finetuning                      & 1,477                        & 26.15                            & 55.76                           \\
BACE     & Finetuning                      & 1,513                        & 34.08                            & 73.71                           \\
HIV      & Finetuning                      & 41,127                       & 25.51                            & 54.93                           \\
MUV      & Finetuning                      & 93,087                       & 24.23                            & 52.55                           \\
ToxCast      & Finetuning                      & 8,576                       & 18.78                          & 38.52                           \\\bottomrule
\end{tabular}%
}
\caption{Summary of biological interaction and chemical molecule datasets from \cite{hu2019strategies}. Used for graph classification in transfer learning experiments. The evaluation metric is ROC-AUC.}
\label{tab:dataset-stats-transfer}
\end{table}

\section{Complete Results on Regularization Analysis}
\label{apd:more_results}

The main hyper-parameter for our method AD-GCL is the regularization strength $\lambda_{\text{reg}}$. Detailed sensitivity analysis is provided in Figures~\ref{fig:reg_term_analysis}, \ref{fig:appendix_perf_vs_edge_drop_regression} and \ref{fig:appendix_perf_vs_edge_drop_classification}. For the method AD-GCL-OPT, we tune $\lambda_{\text{reg}}$ over the validation set among $\{0.1, 0.3, 0.5, 1.0, 2.0, 5.0, 10.0\}$. For the ablation study, i.e. NAD-GCL-OPT the random edge drop ratio is tuned over the validation set among $\{0.0, 0.1, 0.2, 0.3, 0.4, 0.5, 0.6, 0.7, 0.8, 0.9\}$. 
 
\subsection{Optimal regularization strength values}
\label{apd:more_results_opt_reg}
\begin{table}[htb]
\centering
\resizebox{\textwidth}{!}{%
\begin{tabular}{@{}lccccccccc@{}}
\toprule
        & ogbg-molesol & ogbg-mollipo & ogbg-molfreesolv & ZINC-10K & ogbg-molbace & ogbg-molbbbp & ogbg-molclintox & ogbg-moltox21 & ogbg-molsider \\ \midrule
AD-GCL-OPT  &   0.4           &    0.1          &    0.3              &   0.8       &     10.0         &   10.0           &    5.0             &    10.0           &        5.0       \\ \midrule
        & COLLAB       & RDT-B        & RDT-M5K          & IMDB-B   & IMDB-M       & NCI1         & PROTEINS        & MUTAG         & DD            \\ \midrule
AD-GCL-OPT  &   5.0           &      5.0        &       10.0           &   2.0       &      10.0        &   5.0           &    1.0             &    10.0           &       10.0        \\ \bottomrule
\end{tabular}%
}
\caption{Optimal $\lambda_{\text{reg}}$ for AD-GCL on validation set that are used for reporting test performance in Tables~\ref{tab:unsupervised_learning_ogbg} (TOP) and (BOTTOM).}
\label{tab:opt_reg_vals}
\end{table}
Table~\ref{tab:opt_reg_vals} shows the optimal $\lambda_{\text{reg}}$ on the validation set that are used to report test performance in Tables~\ref{tab:unsupervised_learning_ogbg} (both TOP and BOTTOM).

\subsection{Effects of regularization on regression tasks}
\label{apd:more_results_reg_analysis}
Subplots in the topmost row of Figure~\ref{fig:appendix_perf_vs_edge_drop_regression} shows the validation performance for different $\lambda_{\text{reg}}$'s in AD-GCL and random edge drop ratios in NAD-GCL for regression tasks. These observations show an interesting phenomenon that is different from what we observe in classification tasks: for AD-GCL, small $\lambda_{\text{reg}}$ (which in-turn lead to large expected edge drop ratio) results in better performance. A similar trend can be observed even for NAD-GCL, where large random edge drop ratios results in better performance. However, AD-GCL is still uniformly better that NAD-GCL in that regard. We reason that, regression tasks (different from classification tasks) are more sensitive to node level information rather than structural fingerprints and thus, the edge dropping GDA family might not be the most apt GDA family. Modelling different learnable GDA families is left for future work and these observations motivate such steps.
\subsection{Effects of regularization on edge-drop ratio as complete results in Figure~\ref{fig:reg_term_analysis} setting.}
\label{apd:more_results_perf_vs_edge_drop}
\begin{figure}
    \centering
    \includegraphics[width=0.7\textwidth]{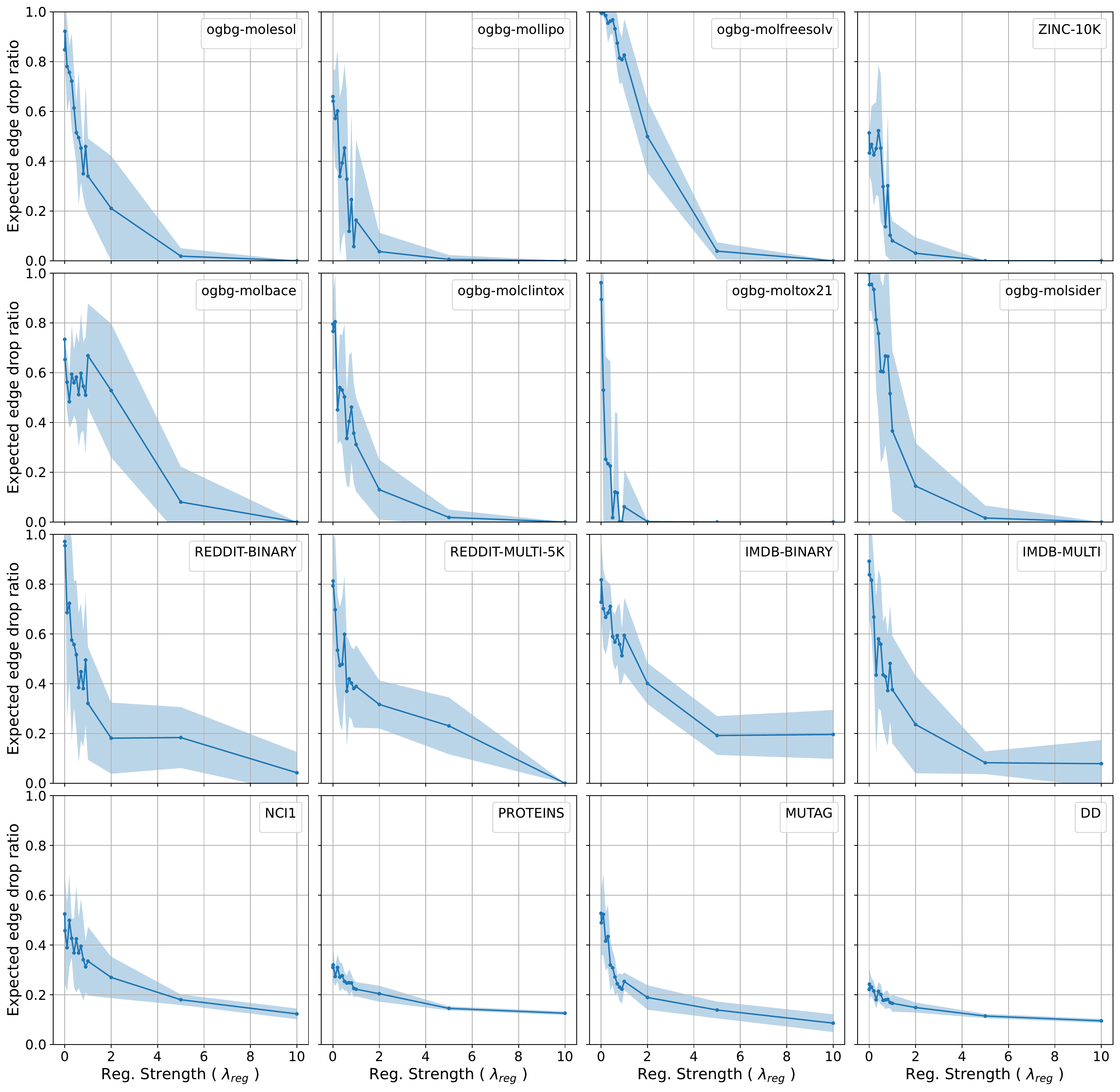}
    \caption{\small{$\lambda_{\text{reg}}$ \textit{v.s.} expected edge drop ratio $\mathbb{E}_\mathcal{G}[\sum_{e} \omega_e/|E|]$ (measured at saddle point of Eq.\ref{eq:ad-gcl-encoder-augmentor-reg}).}}
    \label{fig:appendix_lambda_vs_exp_edge_drop}
\end{figure}

\begin{figure}
    \centering
    \includegraphics[width=0.85\textwidth]{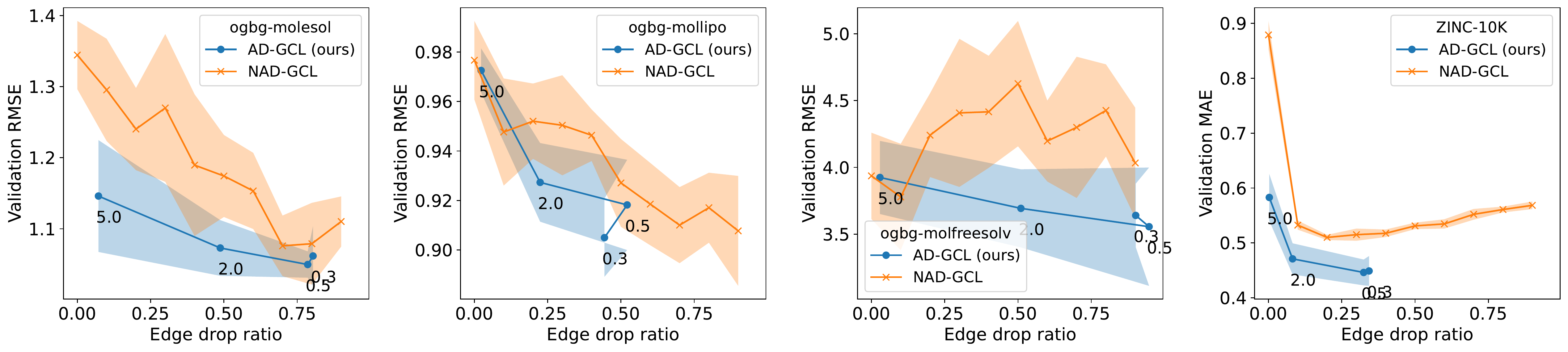}
    \caption{\small{Validation performance for graph regression \textit{v.s.} edge drop ratio. Comparing AD-GCL and GCL with non-adversarial random edge dropping. The markers on AD-GCL’s performance curves show the $\lambda_{\text{reg}}$ used. Note here that lower validation metric is better.} }
    \label{fig:appendix_perf_vs_edge_drop_regression}
\end{figure}

\begin{figure}
    \centering
    \includegraphics[width=0.85\textwidth]{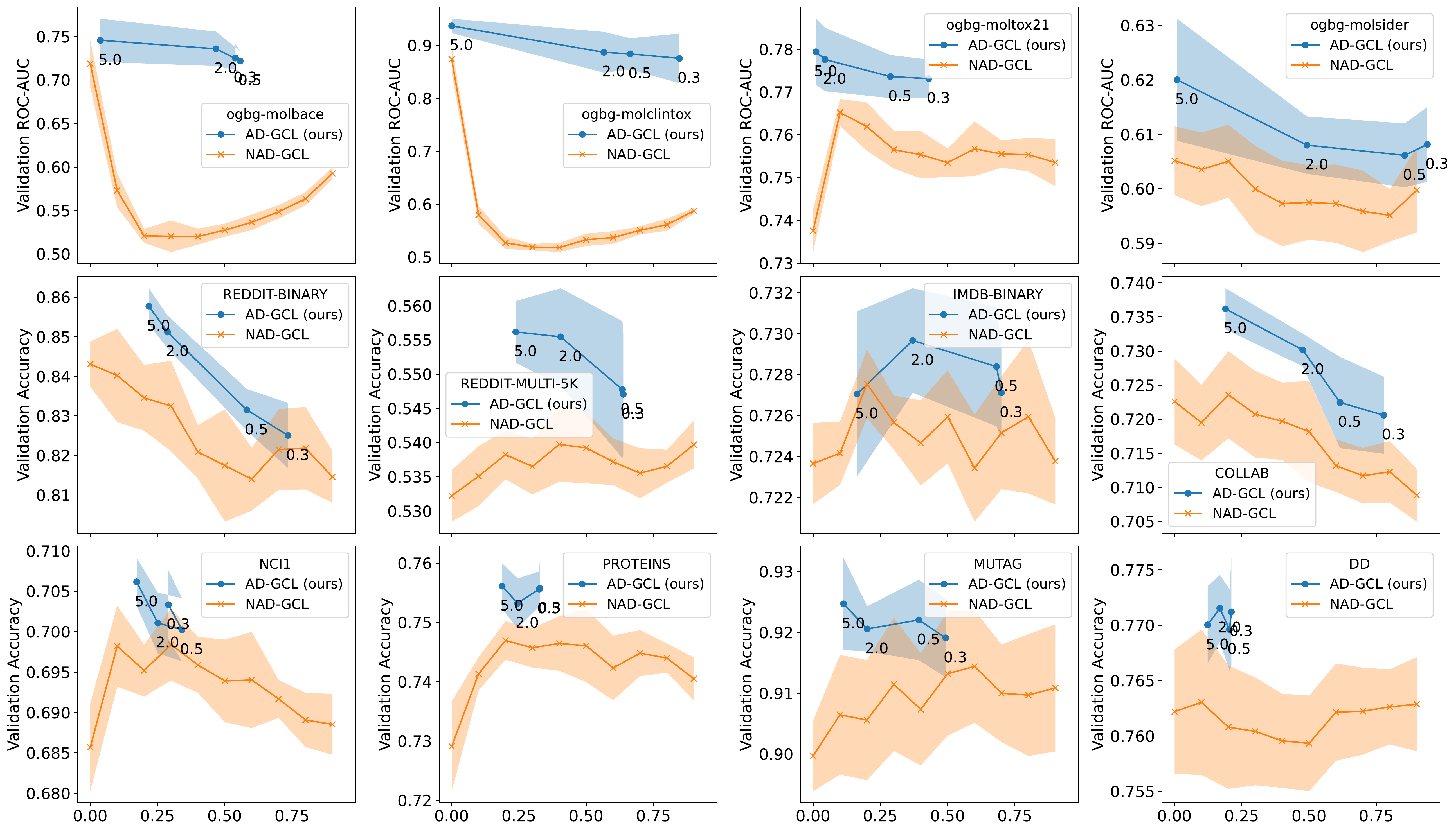}
    \caption{\small{Validation performance for graph classification \textit{v.s.} edge drop ratio. Comparing AD-GCL and GCL with non-adversarial random edge dropping. The markers on AD-GCL’s performance curves show the $\lambda_{\text{reg}}$ used. Note here that higher validation metric is better.}}
    \label{fig:appendix_perf_vs_edge_drop_classification}
\end{figure}

\begin{figure}
    \centering
    \includegraphics[width=0.8\textwidth]{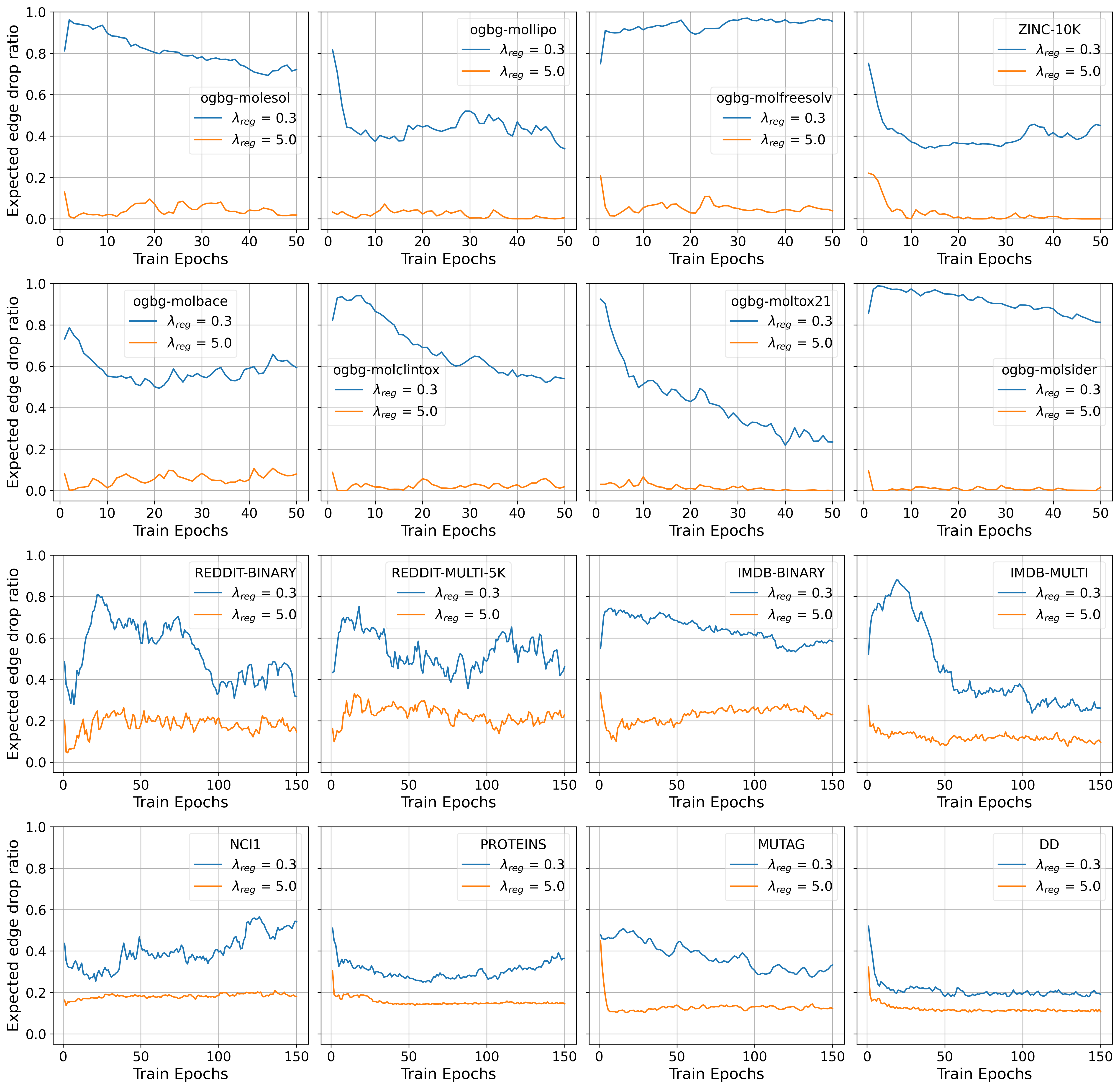}
    \caption{Training dynamics of expected edge drop ratio for $\lambda_{\text{reg}}$.}
    \label{fig:appendix_exp_edge_drop_vs_epochs}
\end{figure}
Figure~\ref{fig:appendix_lambda_vs_exp_edge_drop} shows how different regularization strengths ($\lambda_{\text{reg}}$) affects the expected edge drop ratio for multiple datasets. These results further provides us evidence that indeed, $\lambda_{\text{reg}}$ and the expected edge drop ratio are inversely related in accordance with our design objective and thus provides us with a way of controlling the space of augmentations for our learnable edge dropping GDA. 

Figure~\ref{fig:appendix_perf_vs_edge_drop_classification} shows the complete validation set performance for different edge drop ratios. AD-GCL is compared to a non-adversarial random edge dropping GCL (NAD-GCL). We choose $\lambda_{\text{reg}}$'s that result in an expected edge drop ratio (measured at saddle point of Eq.~\ref{eq:ad-gcl-encoder-augmentor-reg}) value to match the random drop ratio used for NAD-GCL.

Figure~\ref{fig:appendix_exp_edge_drop_vs_epochs} further provides additional plots of the training dynamics of expected edge drop ratio for different $\lambda_{\text{reg}}$ values.

\newpage
\section{Experimental Settings and Complete Evaluation Results}
\label{apd:exp_settings}
In this section, we provide the detailed experimental settings and additional experimental evaluation results for unsupervised, transfer and semi-supervised learning experiments we conducted (Section~\ref{sec:exp}). In addition we also provide details of the motivating experiment (Figure~\ref{fig:bad-exm} in main text). 

\subsection{Motivating Experiment (Figure~\ref{fig:bad-exm})}
\label{apd:exp_settings_motivation}
The aim of this experiment is to show that having GNNs that can maximize mutual information between the input graph and its representation is insufficient to guarantee their performance in the downstream tasks, because redundant information may still maximize mutual information but may degenerate the performance. To show this phenomenon, we perform two case studies: (1) a GNN is trained following the vanilla GCL (InfoMax) objective and (2) a GNN is trained following the vanilla GCL (InfoMax) objective while simultaneously a linear classifier that tasks the graph representations output by the GNN encoder is trained with random labels. These two GNNs have exactly the same architectures, hyperparametes and initialization. Specifically, the GNN architecture is GIN~\cite{xu2018how}, with embedding dimension of 32, 5 layers with no skip connections and a dropout of 0.0. 

Both GNN encoders are trained as above. In the first step of the evaluation, we want to test whether these GNNs keep mutual information maximization. For all graphs in the ogbg-molbace dataset, either one of the GNN provides a set of graph representations. For each GNN, we compare all its output graph representations. We find that, the output representations of every two graphs have difference that is greater than a digit accuracy. This implies that either one of the GNN keeps an one-to-one correspondance between the graphs in the dataset and their representations, which guarantees mutual information maximization.   

We further compare these two GNNs encoders in the downstream task by using true labels. We impose two linear classifiers on the output representations of the above two GNN encoders to predict the true labels. The two linear classifiers have exactly the same architecture, hyperparametes and initialization. Specifically, a simple logistic classifier implemented using sklearn~\cite{scikit-learn} is used with L2 regularization. The L2 strength is tuned using validation set. For the dataset ogbg-molbace, we follow the default train/val/test splits that are given by the original authors of OGB~\cite{hu2020open}. Note that, during the evaluation stage, the GNN encoders are fixed while the linear classifiers get trained. The evaluation performance is the curves as illustrated in Figure~\ref{fig:bad-exm}.

\subsection{Unsupervised Learning}

\paragraph{Evaluation protocol.} In this setting, all methods are first trained with the corresponding self-supervised objective and then evaluated with a linear classifier/regressor. We follow~\cite{chen2020simple} and adopt a linear evaluation protocol. Specifically, once the encoder provides representations, a Ridge regressor (+ L2) and Logistic (+ L2) classifier is trained on top and evaluated for regression and classification tasks respectively. Both methods are implemented using sklearn~\cite{scikit-learn} and uses LBFGS~\cite{zhu1997algorithm} or LibLinear~\cite{fan2008liblinear} solvers . Finally, the lone hyper-parameter of the downstream linear model i.e. L2 regularization strength is grid searched among $\{0.001, 0.01,0.1,1,10,100,1000\}$ on the validation set for every single representation evaluation.

For the Open Graph Benchmark Datasets (ogbg-mol*), we directly use the processed data in Pytorch Geometric format which is available online \footnote{\url{https://ogb.stanford.edu/docs/graphprop/}}. The processed data includes train/val/test that follow a scaffolding split. More details are present in the OGB paper~\cite{hu2020open}. Additionally, we make use of the evaluators written by authors for standardizing the evaluation. The evaluation metric varies depending on the task at hand. For regression tasks it is RMSE (root mean square error) and for classification it is ROC-AUC (\%).

For the ZINC-10K dataset~\cite{dwivedi2020benchmarking}, we use the processed data in Pytorch Geometric format that is made available online\footnote{\url{https://github.com/graphdeeplearning/benchmarking-gnns/tree/master/data}} by the authors. We use the same train/val/test splits that are provided. We follow the authors and adopt MAE (mean absolute error) as the test metric.

For the TU Datasets~\cite{Morris+2020}, we obtain the data from Pytorch Geometric Library~\footnote{\url{https://pytorch-geometric.readthedocs.io/en/latest/modules/datasets.html}} and follow the conventional 10-Fold evaluation. Following standard protocol, we adopt Accuracy (\%) as the test metric.

All our experiments are performed 10 times with different random seeds and we report mean and standard deviation of the corresponding test metric for each dataset.

\paragraph{Other hyper-parameters.} The encoder used for ours and baselines is GIN~\cite{xu2018how}. The encoder is fixed and not tuned while performing self-supervised learning  (i.e. embedding dimension, number of layers, pooling type) for all the methods to keep the comparison fair. The reasoning is that any performance difference we witness should only be attributed to the self-supervised objective and not to the encoder design. Details of encoder for specific datasets.
\begin{itemize}
    \item OBG - \textit{emb dim = 300, num gnn layers = 5, pooling = add, skip connections = None, dropout = 0.5, batch size = 32}
    \item ZINC-10K - \textit{emb dim = 100, num gnn layers = 5, pooling = add, skip connections = None, dropout = 0.5, batch size = 64}
    \item TU Datasets - \textit{emb dim = 32, num gnn layers = 5, pooling = add, skip connections = None, dropout = 0.5, batch size = 32}
\end{itemize} 

The optimization of AD-GCL is performed using Adam and the learning rates for the encoder and the augmenter in AD-GCL are tuned among $\{0.01, 0.005, 0.001\}$. We find that asymmetric learning rates for encoder and augmenter tend to make the training non-stable. Thus, for stability we adopt a learning rate of 0.001 for all the datasets and experiments. The number of training epochs are chosen among $\{20, 50, 80, 100, 150\}$ using the validation set.

\subsubsection{Unsupervised learning with non linear downstream classifier}
\label{apd:unsup_nonlinear}
\begin{table}[htb]
\centering
\resizebox{\textwidth}{!}{%
\begin{tabular}{@{}lcccccccc@{}}
\toprule
           & NCI1             & PROTEINS         & DD               & MUTAG            & COLLAB           & RDT-B            & RDT-M5K          & IMDB-B           \\ \midrule
RU-GIN     & 65.40 $\pm$ 0.17 & 72.73 $\pm$ 0.51 & 75.67 $\pm$ 0.29 & 87.39 $\pm$ 1.09 & 65.29 $\pm$ 0.16 & 76.86 $\pm$ 0.25 & 48.48 $\pm$ 0.28 & 69.37 $\pm$ 0.37 \\
InfoGraph  & 76.20 $\pm$ 1.06 & 74.44 $\pm$ 0.31 & 72.85 $\pm$ 1.78 & 89.01 $\pm$ 1.13 & 70.65 $\pm$ 1.13 & 82.50 $\pm$ 1.42 & 53.46 $\pm$ 1.03 & 73.03 $\pm$ 0.87 \\
GraphCL    & 77.87 $\pm$ 0.41 & 74.39 $\pm$ 0.45 & 78.62 $\pm$ 0.40 & 86.80 $\pm$ 1.34 & 71.36 $\pm$ 1.15 & 89.53 $\pm$ 0.84 & 55.99 $\pm$ 0.28 & 71.14 $\pm$ 0.44 \\ \midrule
AD-GCL-FIX & 75.77 $\pm$ 0.50 & \bf{75.04 $\pm$ 0.48} & 75.38 $\pm$ 0.41 &  88.62 $\pm$ 1.27 & \bf{74.79 $\pm$ 0.41}$^\star$ & \bf{92.06 $\pm$ 0.42}$^\star$ & \bf{56.24 $\pm$ 0.39} & 71.49 $\pm$ 0.98 \\
AD-GCL-OPT & 75.86 $\pm$ 0.62 & \bf{75.04 $\pm$ 0.48} & 75.73 $\pm$ 0.51 & 88.62 $\pm$ 1.27 & \bf{74.89 $\pm$ 0.90}$^\star$ & \bf{92.35 $\pm$ 0.42}$^\star$ & \bf{56.24 $\pm$ 0.39} & 71.49 $\pm$ 0.98 \\ \bottomrule
\end{tabular}%
}
\caption{Unsupervised Learning results on TU Datasets using a non-linear SVM classifier as done in GraphCL~\cite{you2020graph}. Averaged Accuracy (\%) $\pm$ std. over 10 runs. This is different from the linear classifier used to show results in Tables~\ref{tab:unsupervised_learning_ogbg} (TOP) and (BOTTOM).}
\label{tab:unsup_tu_non_linear_eval_results}
\end{table}
In our evaluation, we also observe several further benefits of using a downstream linear model in practice, would like to list them here. First, linear classifiers are much faster to train/converge in practice, especially for the large-scaled datasets or large embedding dimensions, which is good for practical usage. We observe that non-linear SVM classifiers induce a rather slow convergence, when applying to those several OGB datasets where the embedding dimensions are 300 (Table~\ref{tab:unsupervised_learning_ogbg} bottom). Second, compared to the linear model, the non-liner SVM may introduce additional hyper-parameters which not only need further effort to be tuned but also weaken the effect of the self-training procedure on the downstream performance.

\subsection{Transfer Learning}
\paragraph{Evaluation protocol.} We follow the same evaluation protocol as done in~\cite{hu2019strategies}. In this setting, self-supervised methods are trained on the pre-train dataset and later used to be test regarding transferability. In the testing procedure, the models are fine-tuned on multiple datasets and evaluated by the labels of these datasets. We adopt the GIN encoder used in \cite{hu2019strategies} with the same settings for fair comparison. All reported values for baseline methods are taken directly from ~\cite{hu2019strategies} and ~\cite{you2020graph}. For the fine-tuning, the encoder has an additional linear graph prediction layer on top which is used to map the representations to the task labels. This is trained end-to-end using gradient descent (Adam).  
\paragraph{Hyper-parameters.} Due to the large pre-train dataset size and multiple fine-tune datasets finding optimal $\lambda_{\text{reg}}$ for each of them can become time consuming. Instead we use a fixed $\lambda_{\text{reg}} = 5.0$ as it provides reasonable performance.
The learning rate is also fixed to 0.001 and is symmetric for both the encoder and augmenter during self-supervision on the pre-train dataset. The number of training epochs for pre-training is chosen among $\{20, 50, 80, 100\}$ based on the validation performance on the fine-tune datasets. The same learning setting for fine-tuning is used by following ~\cite{you2020graph}.

\subsection{Semi-supervised Learning}
\paragraph{Evaluation protocol.} We follow the protocol as mentioned in ~\cite{you2020graph}. In this setting, the self-supervised methods are pre-trained and later fine-tuned with 10\% true label supervision on the same dataset. The representations generated by the methods are finally evaluated using 10-fold evaluation. All reported values for baseline methods are taken directly from~\cite{you2020graph}. For fine-tuning, the encoder has an additional linear graph prediction layer on top which is used to map the representations to the task labels. This is trained end-to-end by using gradient descent (Adam).
\paragraph{Hyper-parameters.}
For the pre-training our model, a fixed $\lambda_{\text{reg}} = 5.0$ and learning rate of 0.001 for both encoder and augmenter is used. The epochs are selected among $\{20, 50, 80, 100\}$ and finally for fine-tuning with 10\% label supervision, default parameters from ~\cite{you2020graph} are used.

\section{Comparison of AD-GCL and JOAO}
\label{apd:joao_compare}
We first clarify the different mechanisms that JOAO~\cite{you2021graph} and AD-GCL adopt. JOAO selects augmentation families from a pool  $\mathcal{A}$ = {NodeDrop, Subgraph,EdgePert, AttrMask,Identical} and defines a uniform prior on them for their inner optimization over all possible augmentation family pairs. (See Section 3.2 and See Eq. 7,8 in \cite{you2021graph}). An important distinction is that JOAO still adopts uniformly random augmentations and the inner optimization only searches over different pairs of uniform augmentations. Whereas, AD-GCL adopts non-uniformly random augmentations, which essentially corresponds to a much larger search space.

Complexity wise, JOAO is more expensive than AD-GCL as, they utilize projected gradient descent to fully optimize the inner optimization step over all possible augmentations $\mathcal{A}$. This is a factor k more expensive than AD-GCL. The factor k in JOAO is currently $|\mathcal{A}|^2 = 4^2 = 16$. This makes it slow to train while still having a restricted search space compared to AD-GCL which on the other hand is both faster and looks at a larger search space for a given augmentation family. In our experiments on a single GPU, JOAO took ~3.2 hrs for training on COLLAB whereas AD-GCL only took ~14.4 mins (0.24 hrs).

Moreover, we derive the min-max principle in a more principled way by illustrating its connection to graph information bottleneck (Theorem~\ref{thm:main}), which explains the fundamental reason and benefits of optimizing graph augmentation strategies.

\subsection{Experimental Comparison}
We provide comparison between JOAO and AD-GCL in unsupervised learning setting with the standard non-linear downstream classifier setting in Table~\ref{tab:joao_unsup_non_linear}. This is done following \cite{you2021graph} for fair comparison.
Now, we provide the comparison between JOAO and AD-GCL using a linear evaluation protocol for unsupervised setting in Table~\ref{tab:joao_unsup_linear}. Specifically, a linear SVM head is used for evaluating the representations learned by the 2 methods for the downstream task. The regularization hyper-parameter of the linear svm is grid-searched among {0.001, 0.01,0.1,1,10,100,1000}. We re-run the code provided by authors of JOAO (available at \url{https://github.com/Shen-Lab/GraphCL_Automated}) with the default parameters for 5 times each with different seeds. The only change done is to the embedding evaluation code to include linear svm as the final prediction head. For all the TU datasets used here, standard 10-Fold evaluation is used to report classification accuracy (\%).

The results in the above table further indicate that AD-GCL performs better than JOAO in 6 of the 8 TU benchmark datasets. The gap in performance is even more clear compared to the non-linear evaluation setting as shown previously in Table~\ref{tab:joao_unsup_non_linear}. Again, we reiterate that the improved performance gains are due to AD-GCL's search of non-uniformly random augmentations.
\begin{table}[]
\centering
\resizebox{\textwidth}{!}{%
\begin{tabular}{@{}lllllllll@{}}
\toprule
Dataset & NCI1       & PROTEINS   & DD         & MUTAG      & COLLAB     & RDT-B      & RDT-M5K    & IMDB-B     \\ \midrule
JOAO    & 78.07±0.47 & 74.55±0.41 & 77.32±0.54 & 87.35±1.02 & 69.50±0.36 & 85.29±1.35 & 55.74±0.63 & 70.21±3.08 \\
JOAOv2  & 78.36±0.53 & 74.07±1.10 & 77.40±1.15 & 87.67±0.79 & 69.33±0.34 & 86.42±1.45 & 56.03±0.27 & 70.83±0.25 \\
AD-GCL-FIX & 75.77±0.50 & 75.04±0.48 & 75.38±0.41 & 88.62±1.27 & 74.79±0.41 & 92.06±0.42 & 56.24±0.39 & 71.49±0.98 \\ \bottomrule
\end{tabular}%
}
\caption{Unsupervised learning showing Averaged Accuracy (\%) ± std. with a non linear SVM downstream classifier and same standard setup as used in \cite{you2021graph}. The results for JOAO and JOAOv2 are taken from \cite{you2021graph}.}
\label{tab:joao_unsup_non_linear}
\end{table}

\begin{table}[]
\centering
\resizebox{\textwidth}{!}{%
\begin{tabular}{@{}lllllllll@{}}
\toprule
Dataset                & NCI1       & PROTEINS   & DD         & MUTAG      & COLLAB     & RDT-B      & RDT-M5K    & IMDB-B     \\ \midrule
JOAOv2 (FIX-gamma=0.1) & 72.99±0.75 & 71.25±0.85 & 66.91±1.75 & 85.20±1.64 & 70.40±2.21 & 78.35±1.38 & 45.57±2.86 & 71.60±0.86 \\
AD-GCL-FIX             & 69.67±0.51 & 73.59±0.65 & 74.49±0.52 & 89.25±1.45 & 73.32±0.61 & 85.52±0.79 & 53.00±0.82 & 71.57±1.01 \\ \bottomrule
\end{tabular}%
}
\caption{Unsupervised learning showing Averaged Accuracy (\%) ± std. with a linear downstream classifier. JOAOv2 results using linear evaluation is obtained by us using code provided by the authors.}
\label{tab:joao_unsup_linear}
\end{table}

\begin{table}[]
\centering
\resizebox{\textwidth}{!}{%
\begin{tabular}{@{}llllllllll@{}}
\toprule
Fine-Tune Dataset & BBBP       & Tox21      & SIDER      & ClinTox    & BACE       & HIV        & MUV        & ToxCast    & PPI        \\ \midrule
JOAO              & 70.22±0.98 & 74.98±0.29 & 59.97±0.79 & 81.32±2.49 & 77.34±0.48 & 76.73±1.23 & 71.66±1.43 & 62.94±0.48 & 64.43±1.38 \\
JOAOv2            & 71.39±0.92 & 74.27±0.62 & 60.49±0.74 & 80.97±1.64 & 75.49±1.27 & 77.51±1.17 & 73.67±1.00 & 63.16±0.45 & 63.94±1.59 \\
AD-GCL-FIX & 70.01±1.07 & 76.54±0.82 & 63.28±0.79 & 79.78±3.52 & 78.51±0.80 & 78.28±0.97 & 72.30±1.61 & 63.07±0.72 & 68.83±1.26 \\ \bottomrule
\end{tabular}%
}
\caption{Transfer learning results showing mean ROC-AUC $\pm$ std. Pre-Training done using ZINC 2M (used for first 8 fine-tune datasets) and PPI-306K (for the last PPI fine-tune dataset). The results for JOAO and JOAOv2 are taken from \cite{you2021graph}. The experimental setting follows \cite{you2021graph}. }
\label{tab:joao_transfer}
\end{table}

\begin{table}[]
\centering
\resizebox{0.8\textwidth}{!}{%
\begin{tabular}{@{}lllllll@{}}
\toprule
Dataset    & NCI1       & PROTEINS   & DD         & COLLAB     & RDT-B      & RDT-M5K    \\ \midrule
JOAO       & 74.48±0.27 & 72.13±0.92 & 75.69±0.67 & 75.30±0.32 & 88.14±0.25 & 52.83±0.54 \\
JOAOv2     & 74.86±0.39 & 73.31±0.48 & 75.81±0.73 & 75.53±0.18 & 88.79±0.65 & 52.71±0.28 \\
AD-GCL-FIX & 75.18±0.31 & 73.96±0.47 & 77.91±0.73 & 75.82±0.26 & 90.10±0.15 & 53.49±0.28 \\ \bottomrule
\end{tabular}%
}
\caption{Semi-supervised Learning with 10\% label rate showing 10-Fold Accuracy (\%). The results for JOAO and JOAOv2 are taken from \cite{you2021graph}. The experimental setting follows \cite{you2021graph}.}
\label{tab:joao_semi_sup}
\end{table}

More comparison on transfer learning and semi-supervised learning is put in Table~\ref{tab:joao_transfer} and Table~\ref{tab:joao_transfer} respectively, where the experimental settings follow Sec.~\ref{sec:exp}. For transfer learning, AD-GCL outperforms JOAO in 7 among 9 datasets, JOAOv2 in 5 among 9 datasets. For semi-supervised learning, AD-GCL outperforms both of them in all 6 datasets.

\section{Limitations and Broader Impact}
\label{apd:broader_impact}
We stress on the fact that self-supervised methods come with a fundamental set of limitations as they don't have access to the downstream task information. Specifically for contrastive learning, the design of contrastive pairs (done through augmentations) plays a major role as it guides the encooder to selectively capture certain invariances with the hope that it can be beneficial to downstream tasks. Biases could creep in during the design of such augmentations that can be detrimental to the downstream tasks and risk learning of sub-optimal or non-robust representations of input data. Our work helps to alleviate some of the issues of hand designed augmentation techniques and provides a novel principle that can aid in the design of learnable augmentations. It also motivates further research into the understanding the inherent biases of family of augmentations and how they affect the downstream tasks. Finally, self-supervised graph representation learning has a lot of implications in terms of either fairness, robustness or privacy for the various fields that have been increasing adopting these methods. 

\section{Compute Resources}
\label{apd:compute_resource}
All our experiments are performed on a compute cluster managed by Slurm Workload Manager. Each node has access to a mix of multiple Nvidia GeForce GTX 1080 Ti (12GB), GeForce GTX TITAN X (12GB) and TITAN Xp (12GB) GPU cards.

\end{document}